\documentclass[DIV=calc,10pt,twocolumn]{article} 
\usepackage{simpleConference}
\usepackage{times}
\usepackage{graphicx,float}
\usepackage{amssymb}
\usepackage{url}
\usepackage{caption,lipsum}
\usepackage{abstract}

\usepackage{amsmath}
\usepackage{amsthm}
\usepackage{booktabs} 
\usepackage{wrapfig}
\usepackage{tabularx}


\newtheorem{theorem}{Theorem}
\newtheorem{Lemma}{Lemma}

\newtheorem{proposition}{Proposition}

\newtheorem{definition}{Definition}

\newcommand\question[1]

\newcommand{\set}[1]{\left\{#1\right\}}

\newcommand{\Real}{\mathbb R}

\newcommand{\too}{\rightarrow}

\newcommand{\wt}[1]{\widetilde{#1}} 
\newcommand{\wh}[1]{\widehat{#1}} 

\def \aff{\mathrm{aff}} 
\def \mM{\mathcal{M}} 
\def \mS{\mathcal{S}} 
\def \mP{\mathcal{P}} 
\def \mT{\mathcal{T}} 
\def \mE{\mathcal{E}} 
\def \mF{\mathcal{F}} 
\def \mV{\mathcal{V}} 

\def \etal{{et al}.}
\newcommand{\eg}{{\it e.g.}}
\newcommand{\ie}{{\it i.e.}}

\begin{document}
\title{Multi-chart Generative Surface Modeling}

\author{Heli Ben-Hamu \and Haggai Maron \and Itay Kezurer \and Gal Avineri \and Yaron Lipman\\}
\affiliation{Weizmann Institute of Science}









\twocolumn[{%
	\renewcommand\twocolumn[1][]{#1}%
	\maketitle
	\begin{center}
		\vspace{-0.9cm}
		\includegraphics[width=7.0in]{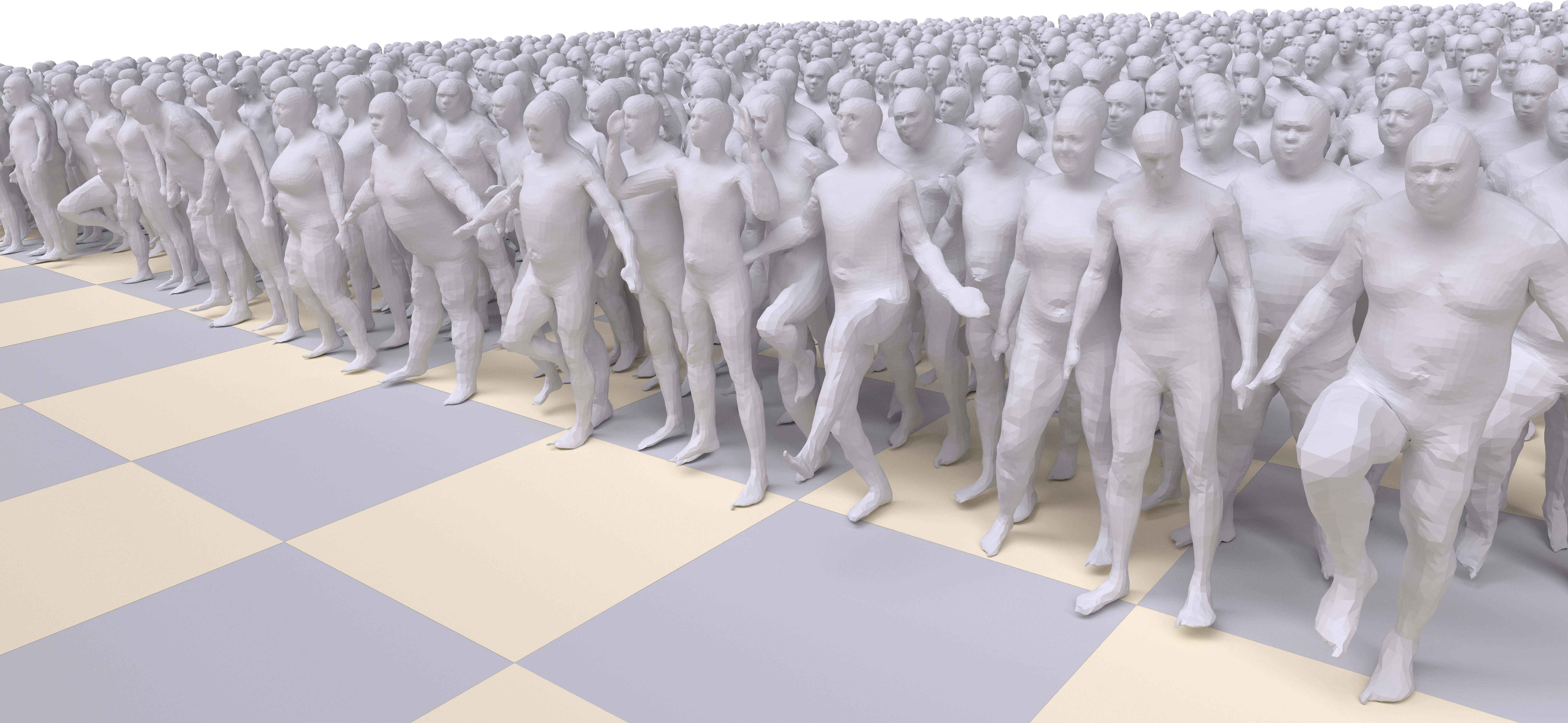}	\captionof{figure}{Our method is able to learn shape distribution and generate unseen shapes. This figure shows 1024 human models randomly generated by our method. }\label{fig:teaser}
	\end{center}%
}]
\saythanks

\subsection*{\centering Abstract} 

This paper introduces a 3D shape generative model based on deep neural networks. 
A new image-like (\ie, tensor) data representation for genus-zero 3D shapes is devised. It is based on the observation that complicated shapes can be well represented by multiple parameterizations (charts), each focusing on a different part of the shape. The new tensor data representation is used as input to Generative Adversarial Networks for the task of 3D shape generation. 

The 3D shape tensor representation is based on a multi-chart structure that enjoys a shape covering property and scale-translation rigidity. Scale-translation rigidity facilitates high quality 3D shape learning and guarantees unique reconstruction. The multi-chart structure uses as input a dataset of 3D shapes (with arbitrary connectivity) and a sparse correspondence between them. The output of our algorithm is a generative model that learns the shape distribution and is able to generate novel shapes, interpolate shapes, and explore the generated shape space.   
%
%
The effectiveness of the method is demonstrated for the task of anatomic shape generation including human body and bone (teeth) shape generation.	



\section{Introduction}


Generative models of 3D shapes facilitate a wide range of applications in computer graphics such as automatic content creation, shape space analysis, shape reconstruction and modeling. 


The goal of this paper is to devise a new (deep) 3D generative model for genus-zero surfaces based on Generative Adversarial Networks (GANs) \cite{goodfellow2016nips}. The main challenge in 3D GANs compared to image GANs is finding a representation of 3D shapes that enables efficient learning. Since standard CNNs work well with image-like data, \ie, tensors, and on the other hand defining CNN on unstructured data seems to pose a challenge \cite{bronstein2017geometric}, most 3D GANs methods concentrate on representing the input shapes in a tensor form. For example, representing the shape using a volumetric grid \cite{wu2016learning,tatarchenko2017octree} or depth-maps \cite{tatarchenko2016multi}. Although natural, these representations suffer from either the high dimensionality of volumetric tensors, their crude brick-like approximation properties,  or the partial, discontinuous and/or occluded cover achieved with projection based methods. 
In a recent paper, Groueix \cite{Groueix18} represent 3D shapes using multiple charts, where each individual chart is defined as a multilayer perceptron (MLP). 

The approach taken in this paper toward 3D shape representation also uses multiple charts but in contrast to previous work the different charts are represented as a \emph{single tensor} (\ie, regular grid of numbers) with the following properties: (i) the different charts are related by a so-called \emph{multi-chart structure} describing their inter relations; (ii) the charts participating in the tensor are smooth (in fact, angle-preserving), bijective, and consistent; and (iii) standard tensor convolution used in off-the-shelf CNNs can be applied to this tensor representation and is equivalent to a well defined convolution on (a cover of) the original surface.


\begin{figure}
\includegraphics[width=\columnwidth]{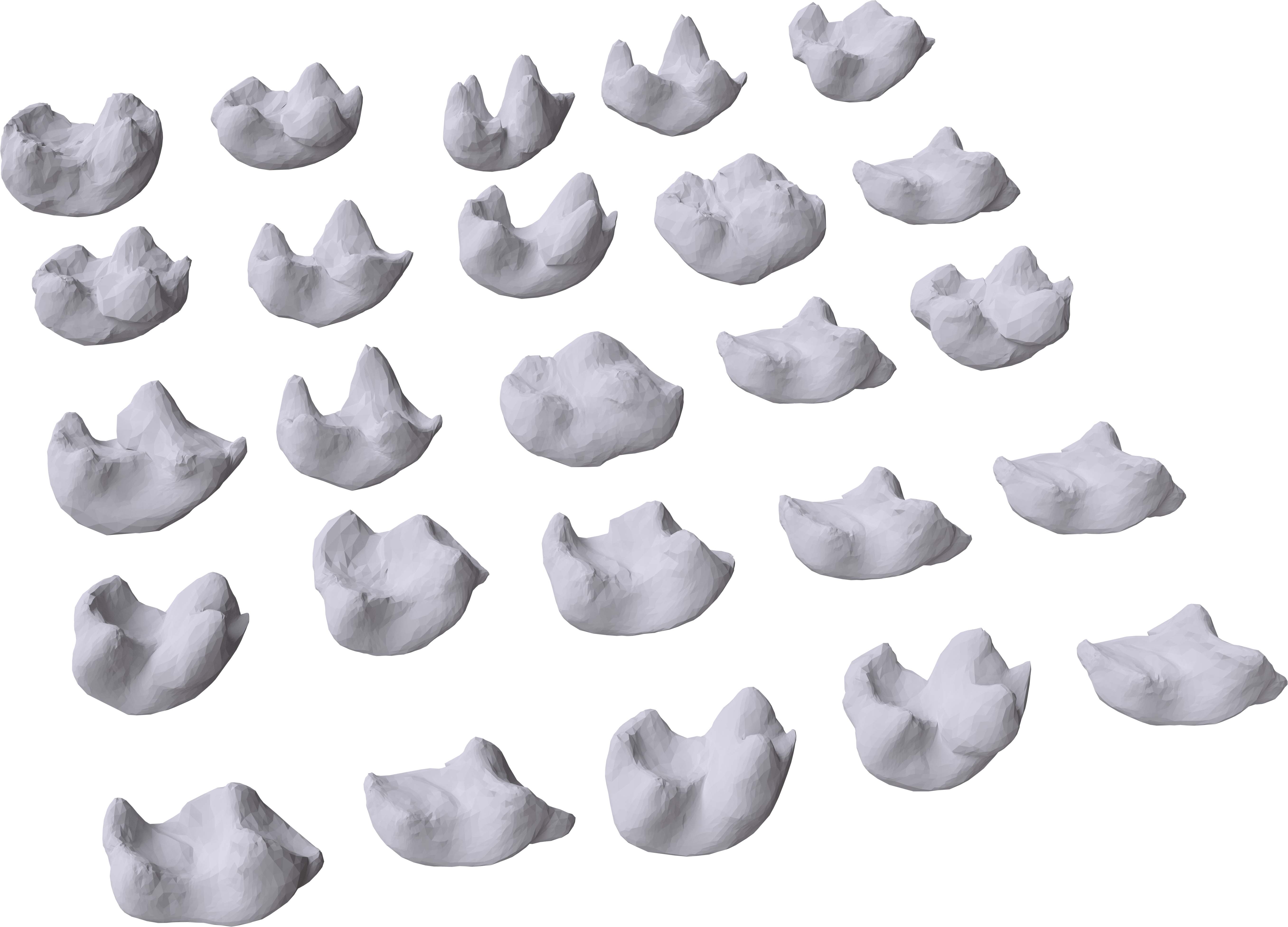}
  \caption{Automatic random generation of 25 teeth models.}\label{fig:teaser_teeth}
\end{figure}

The multi-chart structure is the main building block of our 3D shape tensor representation. Intuitively, a multi-chart structure is a collection of conformal charts, each defined using a triplet of points that, together, cover with small distortion all parts of the shape and are \emph{scale-translation rigid}. Scale-translation (s-t) rigidity is a property that allows recovering the mean and scale of all the charts in a unique manner. S-t rigidity turns out to be significant as the training process has to be performed on normalized charts for effective 3D shape learning. We study s-t rigidity of the multi-chart structure showing it is a generic property and providing a simple sufficient condition for it. 
The multi-chart structure requires only a sparse set of correspondences as input and allows processing shapes with different connectivity and unknown dense correspondence using standard image GAN frameworks.



 We tested our multi-chart 3D GAN method on two classes of shapes: human body and anatomical bone surfaces (teeth). For human body shapes we used datasets of human bodies \cite{bogo2017dynamic,yang2014semantic} consisting of different humans in a collection of poses as input to our 3D GAN framework. Figure \ref{fig:teaser} shows rendering of 1024 models randomly generated using the trained multi-chart 3D GAN. Note the diversity of the human body shapes and poses created by the generative model. For bone surfaces we used the teeth dataset in \cite{boyer2011algorithms}; Figure \ref{fig:teaser_teeth} depicts $25$ teeth randomly generated using our method. As we demonstrate in this paper, our method compares favourably to different baselines and previous methods for 3D shape generation. 
 
 \noindent The code is available at the project webpage\footnote{\url{http://github.com/helibenhamu/multichart3dgans}}.
 
%
%

\section{Previous Work}

\paragraph{Generative adversarial networks.}
Generative adversarial networks (GANs) are deep neural networks aimed at generating data from a given distribution \cite{goodfellow2014generative}.
GANs are composed of two sub-networks (often convolutional neural networks): a generator, which is in charge of generating an instance from the distribution of interest and a discriminator that tries to discriminate between instances that were sampled from the original distribution and instances that were generated by the generator. The training process alternates between optimizing the discriminator to recognize the true samples, and optimizing the generator to fool the discriminator.
These models have become very popular in the last few years and were used to generate many data types such as images \cite{goodfellow2014generative}, videos \cite{vondrick2016generating}, 3D data (as will be reviewed below) and more. Our work uses GANs in order to generate surfaces of a certain class. We will dedicate the rest of this section to generation of 3D data. For further details on GANs see \cite{goodfellow2016nips}.

\paragraph{Volumetric data generation.}
A natural way to use deep learning for 3D data generation is to work on volumetric grids and corresponding volumetric convolutions \cite{wu2016learning}. Usually, the shape is represented using an occupancy function on the grid. Most approaches use autoencoders or GANs as the generative model.

Multiple works take different inputs such as a 3D scan with missing parts or images: \cite{dai2016shape} use convolutional neural networks in order to fill in missing parts in scanned 3D models, a task that was also recently targeted by \cite{wang2017shape} (using GANs and recurrent convolutional networks). \cite{gadelha20163d} propose to learn a distribution of 3D shapes from an input of images of these models using a novel 2D projection layer. In a related work \cite{zhu2017rethinking} try to generate a 3D model from a single image.
Another type of input can be supplied by the user: \cite{liu2017interactive} suggested a system that is based on 3D GANs and user interaction that generates 3D models.
A main drawback of these volumetric approaches is the high computational load of working in discretized 3D space which results in low resolution shape representation. \cite{tatarchenko2017octree} tried to bypass this problem by using smart data structures (\eg, octree) for 3D data. Another disadvantage is the fact that volumetric indicators are not optimal for smooth surface approximation, resulting in brick-like shape approximation.


\paragraph{Point cloud data generation.}
Some works have targeted the generation of 3D point clouds. This type of 3D data representation resolves the resolution limitation of the volumetric representation, but introduces new challenges such as points' order invariance and equivariance \cite{qi2017pointnet,zaheer2017deep}. \cite{fan2016point} use this representation for the problem of 3D reconstruction from a single image. \cite{achlioptas2018learning} design and study autoencoders and GANs. \cite{nash2017shape} use a variational autoencoder \cite{doersch2016tutorial} in order to generate point clouds and corresponding normals.

\paragraph{Depth maps generation.}
Another group of papers have targeted the generation of depth maps (possibly with normals). A depth map is a convenient representation since it is formulated as a tensor (a regular grid of numbers) similarly to standard images. 

\cite{tatarchenko2016multi} use an encoder-decoder architecture in order to generate a depth map from a single image. \cite{wu2017marrnet} suggest an end to end framework that takes images and generates voxelized 3D models of the shape in them, by estimating depth maps, silhouettes and normal maps as an intermediate step in the network. \cite{lun20173d} use drawings as input and generate multi view depth maps and normals which are again fused together to a single output.
In a different variation, \cite{soltani2017synthesizing} learn a model that takes a depth map or a silhouette of a shape and generates multiview depth maps and corresponding silhouettes. Using these depth outputs they generate a single point cloud in a post process.

\paragraph{Surface generation.}
The last shape representation we discuss is a 3D triangular mesh. This representation includes both a point cloud and connectivity information and is the type of representation we use in this paper. 

\cite{litany2017deformable} targeted deformable shape completion using a variational autoencoder, but their framework also allows to sample random human shapes which is the main focus of our paper. Their main limitation, in comparison to our method, is their reliance on consistent input connectivity (\ie, input shapes with the same triangulation and full 1-1 correspondences) while we only rely on a sparse set of consistent landmarks. This allows us to learn from multiple different datasets consisting of diverse shapes with arbitrary triangulations.

\cite{sinha2017surfnet} use a parameterization to a regular planar domain (an image) and represent the surface using its Euclidean coordinates. \cite{Groueix18} use multilayer perceptrons (MLPs) in order to learn multiple parameterization functions directly (\ie, functions  $f:\Omega\subset\Real^2\rightarrow\Real^3$). 
These works are the most similar to ours: Similarly to \cite{sinha2017surfnet} we also use parameterizations into a planar domain; we use parameterizations of a cover of the surface to a torus as in \cite{maron2017convolutional}. In contrast to their work,  our parameterizations are conformal and we use multiple charts that cover the shape and preserve small details. We note that \cite{sinha2017surfnet} solve for dense correspondences of the input shapes as preprocess, which is a challenging problem that currently cannot always be accomplished with high accuracy.
\cite{Groueix18}, on the other hand, also use multiple parameterizations. Their method is more general than ours as they do not assume sparse correspondences between the shapes, nor assume that the input shapes are of sphere topology. The downside of their approach is that their generated shapes have considerably less details and the generated charts do not match with high accuracy. 

\paragraph{Pre-deep learning works.}
Multiple works have targeted shape synthesis in the pre-deep learning era. Some works concentrated on composing new shapes from components;  \cite{funkhouser2004modeling} suggested an interactive system where a user can assemble shapes from a segmented shape database. \cite{kalogerakis2012probabilistic} learn a generative component based model that is able to generate novel shapes from a certain class. 

Another line of works tried to learn the shape space of a certain class of shapes;  \cite{allen2003space,anguelov2005scape,loper2015smpl} have all targeted the shape of the human body. 
In contrast to our work, these works solve for dense correspondences using a specifically-tailored deformation model. We do not use a specific deformation model. Instead, we use a high dimensional generative model to learn the shape space. We further demonstrate that other classes of shapes (\eg, bones) can be learned by the exact same generative model.





\section{Method}

\subsection{Problem statement}
Given a collection of surfaces $\mM=\set{M^s}_{s=1}^m\subset \mS$ sampled from some distribution $ֿ\mu$ in $\mS$, a collection of surfaces of the same class (\eg, humans, bones),  our goal it to learn a \emph{generative model} $G:\Real^d \too \mS$ of $\mu$. By generative model we mean a random variable $G$ that samples from the distribution $\mu$. 

The surfaces in $\mM$ are represented as surface meshes, namely triplets of the form $M^s=(V^s,E^s,F^s)$, where
$V^s, E^s, F^s$ are the vertex set, the edge set, and the face set, respectively. We do not require the meshes to share connectivity nor that a complete correspondence between the meshes is known. Rather, we will assume only a sparse set of landmark correspondences $\mP^s=\set{p_{i}^s}$ is given,  $p_{i}^s\in V^s$, $i\in [n]$. In this paper we used $n=6$ (for bones) or $n=21$ (for humans), see Figure \ref{fig:structure_and_charts}(b) for visualization of $\mP^s$ (orange dots) on three human surfaces in $\mM$. For brevity, we will henceforth remove the superscript $s$ from $\mP$, $p$ and $M=(V,E,F)$. 
\subsection{Conformal toric charts}
\begin{wraptable}[15]{r}{0.30\columnwidth}
	\vspace{-0.4cm}\hspace{-14pt}
	\includegraphics[width=0.30\columnwidth]{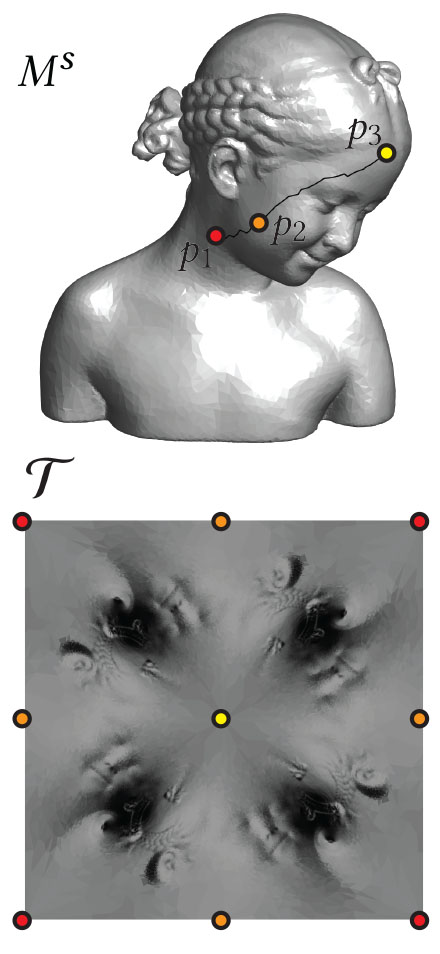}
	\vspace{-0cm}
\end{wraptable}
Our approach for learning $G$ is to reduce the surface generation problem to the image generation problem and use state of the art image GANs. The reduction to the image setting is based on a generalization of \cite{maron2017convolutional} to the multi-chart setting. \cite{maron2017convolutional} computes charts from the image domain to a surface $M$ using conformal charts, $\Phi_{P}:\mT \too M^4$, where $P=\set{p_1,p_2,p_3}\subset V$ is a triplet of points, $M^4$ is a topological torus, constructed by stitching four identical copies of $M$, and $\mT$ is the flat torus, namely the square $[-1,1]^2$ where opposite edges of the square are identified (\ie, periodic square). The torus is used as it is the only topological surface where the standard image convolution in $[-1,1]^2$, equipped with periodic padding, corresponds to a continuous, translation invariant operator over the surface. The degrees of freedom in the conformal chart $\Phi_P$ are exactly the choice of triplet $P=\set{p_1,p_2,p_3}\subset V$, where the center and corners of $[-1,1]^2$ are mapped to $p_i$, $i=1,2,3$, see the inset for an illustration.


A conformal chart, while preserving angles, can produce significant area scaling, and different choices of triplets $P$ produce low scale distortion in different areas of the surface $M$. In fact, for surfaces with perturbing parts it is impossible to choose a single triplet (chart) that provides low scale distortion everywhere. In this paper we therefore advocate a \emph{multi-chart structure} allowing to produce global, low scale distortion coverage of surfaces. 

\begin{figure*}[h]
		\includegraphics[width=\textwidth]{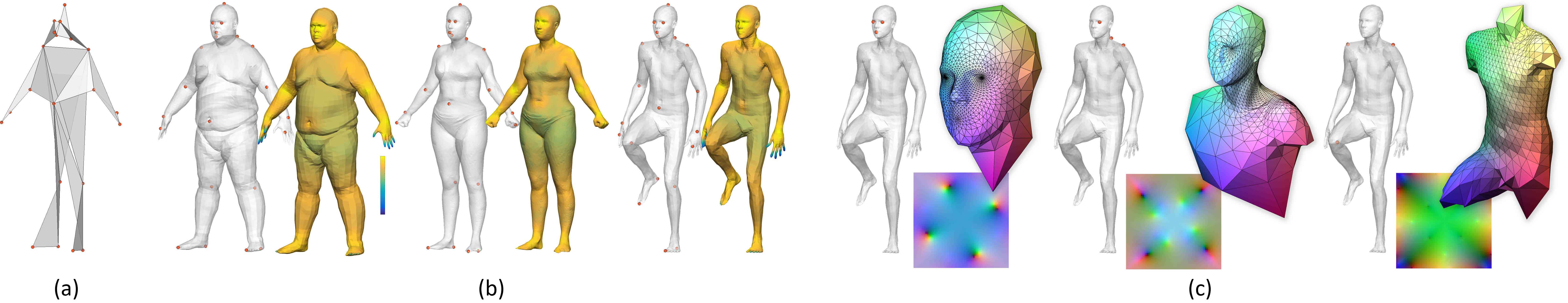}
		\caption{(a) Shows the multi-chart structure $T$; (b) shows three meshes from $\mM$, for each we show the landmark correspondences $p_{i}$ (left in each pair) and the maximal scale across all charts (right in each pair), color ranges $[10^{-1},10^3]$; (c) three charts corresponding to three different faces of $T$, we show the chart's triplet of points on the surface (left), the $(x,y,z)$ coordinates flattened to $[-1,1]^2$ (bottom) and the geometry reconstructed from this chart (right). Different charts provide low distortion coverage of different areas of the surface.}\label{fig:structure_and_charts}
	\end{figure*}

\subsection{Multi-chart structure}


 The multi-chart structure is a collection of charts that collectively represents a shape. Each chart 
 \begin{equation}\label{e:charts}
\Phi_{P}:\mT \too M^4,	
\end{equation}
is defined by a triplet of landmark points $P=(p_i,p_j,p_k)$ chosen from the collection of landmark points on the surface $\mP\subset M$.  The \emph{multi-chart structure} is therefore a pair $(\mP,T)$, where $\mP\in \Real^{n\times 3}$ is the set of landmarks and $T=(\mV, \mE, \mF)$ is an abstract triangulation, where $\mV=[n]$ is the vertex set, $\mE=\set{(ij)}$ the edge set, and $\mF=\set{(ijk)}$ the face set. Every face of the multi-chart triangulation $(ijk)\in\mF$ represents a chart $\Phi_P$, $P=(p_i,p_j,p_k)$ as in \eqref{e:charts}. We will abuse notation and write $P=(ijk)\in\mF$.
See Figure \ref{fig:structure_and_charts}(a) for a visualization of the multi-chart triangulation $T$ embedded in $\Real^3$, and \ref{fig:structure_and_charts}(b) for visualization of the landmarks $\mP$ on three human surfaces. 

Every mesh $M$ in our collection $\mM$ has $|\mF|=c$ charts (in this paper we choose $c=4$ (for bones) or $c=16$ (for humans)). Figure \ref{fig:structure_and_charts}(c) shows three charts of a single mesh $M$; for each chart we show: its defining triplet of landmarks from $\mP$ set by a face in the triangulation $P\in \mF$ (orange dots), the chart itself, $\Phi_{P}$, visualized as RGB image over $[-1,1]^2$, and the geometry captured by $\Phi_{P}$ restricted to a finite mesh overlaid on $[-1,1]^2$.
		
In order to faithfully represent shapes and enable effective 3D shape learning, the multi-chart structure should possess the following two properties: \emph{Covering property} and \emph{Scale-translation (s-t) rigidity}.

\paragraph{Covering property} Each face (triplet) in the multi-chart structure zooms-in on a different part of the surface. As the meshes are assumed to be of the same class (\eg, humans), it is usually possible to choose a multi-chart structure $(\mP,T)$ such that the chart collection $\set{\Phi_{P}}$ produces a good coverage of all meshes in $\mM$.



%

	Figure \ref{fig:structure_and_charts}(b) illustrates three different meshes colored according to the maximal area scale exerted by the different charts at each point on the surface. Note that almost everywhere the scale function is greater than $0.1$. This means that every part of the original surface is represented in at-least one of the charts with scale factor bounded by $0.1$.

		\paragraph{Scale-translation (s-t) rigidity property.} As we demonstrate in Section \ref{s:evaluation}, when training a network to predict multi-charts it is imperative that all the charts are centered (zero mean) and of the same scale (unit norm); This assures the network does not concentrate on learning large-norm charts (\eg, torso) while neglecting small-norm charts (\eg, head or hand). 
		
		Centering and scaling of the charts results in the loss of their natural scale and mean value. Thus, to reconstruct a shape from centered-scaled multi-charts (which are the output of our network) we need to recover, for each chart, a unique scale and mean (referred also as \emph{translation}). Each centered-scaled chart contains a triplet of points in $\Real^3$,		
		\begin{equation}	\label{e:r}(r_{\scriptscriptstyle{P},i},r_{\scriptscriptstyle{P},j},r_{\scriptscriptstyle{P},k})\in\Real^{3\times 3}, \qquad P=(ijk)\in \mF 
		\end{equation}
		that are a centered-scaled version of the original landmarks $(p_i,p_j,p_k)$ in $M$. S-t rigidity is the property that allows reconstructing the original scale and translation (\ie, mean) of the charts:
		\begin{definition}\label{def:rigidity}
	 	A multi-chart structure $(\mP,T)$ is \emph{scale-translation (s-t) rigid} if given a set of centered-scaled triplets, Eq.~\eqref{e:r}, the original landmarks $\mP=\set{p_i}$ can be recovered uniquely up to a global scale and translation.
		\end{definition}
				
		Let us provide an algebraic characterization to s-t rigidity. Let 
		 $r\in\Real^{3\times 3\times |\mF|}$  be the positions in $\Real^3$ of every vertex in every face (\ie, chart) $P=(ijk)\in \mF$.
		Points in $r$ corresponding to the same vertices in the triangulation might not be equal (recall that each face is centered and scaled). We would like to find translation $b_{\scriptscriptstyle{P}}\in \Real^3$ and scale $a_{\scriptscriptstyle{P}}\in \Real$ per face $P\in \mF$ to reverse the centering and scaling and obtain a unique consistent embedding $q=(q_1,\ldots,q_n)^T\in \Real^{n\times 3}$ of the vertices $\mV$, up to global scale and translation. Consistent means each vertex has the same coordinates in each triangle it belongs to. $a_P,b_P$ are solutions to the linear system:
		\begin{equation}\label{e:scale_translation}
		a_{\scriptscriptstyle{P}} r_{\scriptscriptstyle{P},l} + b_{\scriptscriptstyle{P}} = q_{l}, \quad \forall P=(ijk)\in \mF, \quad  \forall l\in \set{i,j,k}.
		\end{equation}
		This is a homogeneous over-determined system of equations where for each solution $q$, also its global scales $\alpha q_l$, $\alpha\in \Real$ and/or global translations $q_l + \beta$, $\beta\in\Real^3$ are solutions. 
		To set a unique solution we need to set the scale and translation of a single chart, $P_0\in \mF$,
		\begin{equation}\label{e:P0}
		a_{\scriptscriptstyle{P_0}} = 1, \quad b_{\scriptscriptstyle{P_0}}=0.
		\end{equation}

		%

		S-t rigidity can be equivalently stated in terms of the linear system \eqref{e:scale_translation}-\eqref{e:P0}:
		\begin{proposition}\label{prop:rigidity2}
			A multi-chart structure $(\mP,T)$ is scale-translation rigid iff the linear system \eqref{e:scale_translation}-\eqref{e:P0} has full column-rank. 
		\end{proposition}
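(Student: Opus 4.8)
The plan is to recast the whole statement in terms of the null space of the linear map appearing in \eqref{e:scale_translation}. First I would stack all unknowns into one vector $x=(a_{\scriptscriptstyle P},b_{\scriptscriptstyle P},q_l)$ and read \eqref{e:scale_translation} as a homogeneous system $Ax=0$, writing $S=\ker A$ for its solution space. The normalization \eqref{e:P0} contributes four extra scalar rows ($a_{\scriptscriptstyle P_0}=1$ and $b_{\scriptscriptstyle P_0}=0\in\Real^3$), so the full system \eqref{e:scale_translation}-\eqref{e:P0} is an inhomogeneous system $Mx=d$, where $M$ is $A$ with those rows appended. Since full column rank is by definition $\ker M=\set{0}$, and $\ker M=S\cap H$ for the codimension-$4$ subspace $H=\set{x : a_{\scriptscriptstyle P_0}=0,\ b_{\scriptscriptstyle P_0}=0}$, the proposition reduces to comparing $S$ with a distinguished subspace sitting inside it.

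Next I would exhibit that distinguished subspace $\Sigma\subseteq S$, which carries exactly the global scale-translation freedom. The three translation directions $t_\beta=(0,\beta,\beta)$, meaning $a_{\scriptscriptstyle P}=0$, $b_{\scriptscriptstyle P}=\beta$, $q_l=\beta$ for $\beta\in\Real^3$, clearly satisfy \eqref{e:scale_translation} and hence lie in $S$. Because the triplets $r_{\scriptscriptstyle P,l}$ are a centered-scaled copy of genuine landmarks, there is a true reconstruction $x^\ast=(a^\ast,b^\ast,\mP)\in S$ whose scale entry satisfies $a^\ast_{\scriptscriptstyle P_0}\neq 0$. I set $\Sigma$ to be the span of $x^\ast$ together with the three $t_\beta$; it is $4$-dimensional (non-degeneracy of the landmark configuration makes $x^\ast$ independent of the translations), and its $q$-projection is exactly $\set{\alpha\mP+\beta}$. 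Assuming each chart triplet is non-collinear, \eqref{e:scale_translation} determines $(a_{\scriptscriptstyle P},b_{\scriptscriptstyle P})$ from $q$ on every face, so recovering the embedding $q$ is equivalent to recovering the full vector $x$; Definition \ref{def:rigidity} then reads, in this language, precisely as $S=\Sigma$.

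Finally I would run a dimension count against $H$. The key step is transversality $\Sigma\cap H=\set{0}$: an element $\alpha x^\ast+t_\beta$ lying in $H$ forces $\alpha a^\ast_{\scriptscriptstyle P_0}=0$, hence $\alpha=0$ since $a^\ast_{\scriptscriptstyle P_0}\neq 0$, and then $b_{\scriptscriptstyle P_0}=\beta=0$, so the element vanishes. Consequently, if the structure is rigid, i.e.\ $S=\Sigma$, then $\ker M=S\cap H=\Sigma\cap H=\set{0}$ and $M$ has full column rank. Conversely, if it is not rigid then $\dim S\geq 5$, and intersecting with the codimension-$4$ subspace $H$ leaves $\dim\ker M=\dim(S\cap H)\geq\dim S-4\geq 1$, so $M$ is rank deficient. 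The two implications together yield the claimed equivalence.

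I expect the main obstacle to be the faithful translation of Definition \ref{def:rigidity} into the clean algebraic condition $S=\Sigma$, rather than the rank count itself. Concretely, one must verify that $\Sigma$ is genuinely $4$-dimensional, which requires a non-degenerate landmark configuration so that $x^\ast$ does not collapse into the translation subspace, and that passing between a reconstructed embedding $q$ and the per-chart unknowns $(a_{\scriptscriptstyle P},b_{\scriptscriptstyle P})$ loses no information, which is exactly where the non-collinearity of each chart triplet is used. Once that dictionary is in place, the transversality and dimension bookkeeping above close the argument.
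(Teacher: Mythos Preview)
Your argument is correct and more carefully structured than the paper's own proof. The paper proceeds by direct contradiction in each direction: for $\impliedby$, it takes two embeddings not related by a global scale--translation, normalizes each so that \eqref{e:P0} holds, and observes that their difference is a nonzero kernel element; for $\implies$, it notes that a rank defect yields two distinct solutions agreeing on the anchor chart $P_0$, which therefore cannot differ by a global scale--translation. Your route instead isolates the $4$-dimensional ``trivial deformation'' subspace $\Sigma\subseteq S$, recasts Definition~\ref{def:rigidity} as $S=\Sigma$, and closes with the transversality $\Sigma\cap H=\{0\}$ together with a codimension count. The two approaches are equivalent in spirit---both hinge on \eqref{e:P0} acting as a gauge fix transverse to the global scale--translation orbit---but yours makes the non-degeneracy hypotheses explicit (distinct landmarks so that $\dim\Sigma=4$; non-collinear chart triplets so that the $q$-projection is injective on $S$), whereas the paper's brief proof leaves these implicit and is somewhat loose notationally (it reuses the symbol $r$ for the solution vector rather than for the data).
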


		We prove this proposition in Appendix \ref{appendixA}. Next, we claim that s-t rigidity is a property depending only on the abstract triangulation $T$ and not on a specific choice of landmarks $\mP$.

\begin{figure*}[t]
	\includegraphics[width=\textwidth]{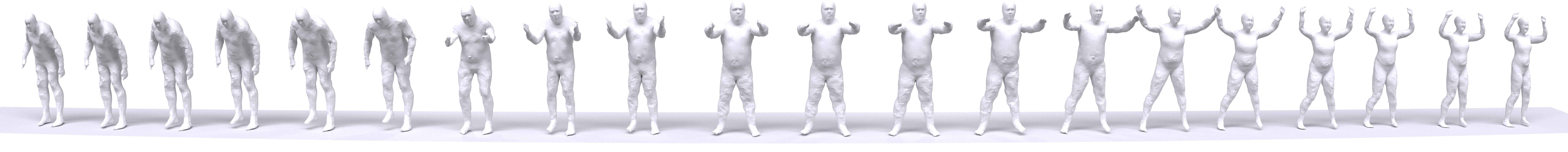}
	\caption{Equispaced interpolation between two humans of different body characteristics. } \label{fig:Humannterpolation} 
\end{figure*}
		
		\begin{theorem}\label{thm:st_generic}
			A multi-chart structure $(\mP,T)$ is either scale-translation rigid for almost all $\mP$ or not scale-translation rigid for any $\mP$.
		\end{theorem}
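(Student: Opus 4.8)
The plan is to leverage Proposition \ref{prop:rigidity2}, which reduces s-t rigidity to a full-column-rank condition on the coefficient matrix $A(\mP)$ of the linear system \eqref{e:scale_translation}-\eqref{e:P0}, and then to argue that this rank is \emph{generic} in $\mP$. The guiding principle is the elementary fact that the maximal rank of a matrix whose entries are polynomials in some parameters is attained on the complement of a proper algebraic set, which has Lebesgue measure zero; this is exactly the dichotomy we seek.

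First I would make the dependence of $A$ on $\mP$ explicit. The rows coming from the normalization \eqref{e:P0} have constant entries, while those coming from \eqref{e:scale_translation} involve the centered-scaled coordinates $r_{\scriptscriptstyle{P},l}$ together with the fixed $\pm 1$ coefficients of $b_{\scriptscriptstyle{P}}$ and $q_l$. Writing the centering and scaling out, $r_{\scriptscriptstyle{P},l} = (p_l - \tfrac13(p_i+p_j+p_k))/\sigma_{\scriptscriptstyle{P}}$ for $P=(ijk)$, where $\sigma_{\scriptscriptstyle{P}}>0$ is the scale of the triplet. The obstacle is that $\sigma_{\scriptscriptstyle{P}}$ is not a polynomial (indeed not even rational) in the entries of $\mP$, so $A(\mP)$ is not a polynomial matrix as it stands.

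The key step, and the one I expect to be the crux, is to remove the scaling factors without altering the rank. Observe that the $r$-entries appear only in the single column associated to the unknown $a_{\scriptscriptstyle{P}}$; every other column (those of $b_{\scriptscriptstyle{P}}$ and $q_l$, and the normalization rows) has constant entries. Hence multiplying the $a_{\scriptscriptstyle{P}}$ column by the nonzero scalar $\sigma_{\scriptscriptstyle{P}}$ — equivalently, the change of variable $a_{\scriptscriptstyle{P}}\mapsto a_{\scriptscriptstyle{P}}/\sigma_{\scriptscriptstyle{P}}$ — turns every such entry into the \emph{centered-only} coordinate $\tilde r_{\scriptscriptstyle{P},l} = p_l - \tfrac13(p_i+p_j+p_k)$, which is \emph{linear} in $\mP$. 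Column scaling by nonzero constants preserves rank, and the constant entries from \eqref{e:P0} are merely rescaled to other nonzero constants, so the resulting matrix $\tilde A(\mP)$ has the same rank as $A(\mP)$ and, crucially, has entries that are polynomials in the coordinates of $\mP$. This manipulation is valid on the open dense set where every triplet $(p_i,p_j,p_k)$ is nondegenerate, i.e. $\sigma_{\scriptscriptstyle{P}}\neq 0$; the complementary measure-zero set is harmless, as it can be absorbed into the exceptional set of the conclusion.

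Finally I would invoke the generic-rank dichotomy. Full column rank of $\tilde A(\mP)$ is equivalent to the non-vanishing of at least one of its maximal minors, each of which is a polynomial in $\mP$. There are two mutually exclusive cases. If every maximal minor is the identically-zero polynomial, then $\tilde A(\mP)$ is rank-deficient for all $\mP$, and by Proposition \ref{prop:rigidity2} the structure $(\mP,T)$ is not s-t rigid for any $\mP$. Otherwise some maximal minor is a nonzero polynomial $D(\mP)$; its zero set $\set{D=0}$ is a proper algebraic subvariety of $\Real^{n\times 3}$ and therefore has Lebesgue measure zero, so $\tilde A(\mP)$ has full column rank — equivalently $(\mP,T)$ is s-t rigid — for almost every $\mP$. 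This is precisely the claimed alternative, and since whether a given minor vanishes identically depends only on the incidence pattern of vertices and faces, the outcome depends only on the abstract triangulation $T$.
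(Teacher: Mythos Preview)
Your argument is correct modulo one small slip: after scaling the $a_{\scriptscriptstyle P}$-column by $\sigma_{\scriptscriptstyle P}$, the row coming from $a_{\scriptscriptstyle{P_0}}=1$ in \eqref{e:P0} acquires the entry $\sigma_{\scriptscriptstyle{P_0}}$, which depends on $\mP$ and is not a ``constant'' as you wrote. This is harmless---since that row has a single nonzero entry, a further row scaling by $1/\sigma_{\scriptscriptstyle{P_0}}$ restores it to $1$ without affecting anything else---so with this one-line fix $\tilde A(\mP)$ is genuinely polynomial and the rest of your generic-rank argument goes through verbatim.

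The paper reaches the polynomial setting by a different device: instead of eliminating the scale factors, it treats the centering and scaling as \emph{free} auxiliary parameters, writing $r_{\scriptscriptstyle P}=\alpha_{\scriptscriptstyle P}(p_i,p_j,p_k)+\beta_{\scriptscriptstyle P}$, so that $\det(A^TA)$ is a polynomial in $(\mP,\alpha,\beta)$ jointly; expanding in monomials of $(\alpha,\beta)$ yields coefficient polynomials $\tau_k(\mP)$, and the dichotomy follows from whether all $\tau_k$ vanish identically. The implicit last step there---that rigidity at a given $\mP$ does not depend on the particular $(\alpha,\beta)$ as long as $\alpha_{\scriptscriptstyle P}\neq 0$---is precisely the reparametrization you make explicit via column scaling. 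Your route is more direct and stays entirely in the parameter space $\mP$; the paper's route introduces extra variables but makes the independence from the specific normalization manifest. Both arrive at the same conclusion that the alternative depends only on the abstract triangulation $T$.
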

		This theorem can be proved using the fact that a non-zero multivariate polynomial is non-zero almost everywhere \cite{caron2005zero}. The full proof can be found in Appendix \ref{appendixA}.
		  
		It is so far not clear that s-t rigid triangulations $T$ even exist. Furthermore, Proposition \ref{prop:rigidity2} does not provide a practical way for designing multi-chart triangulations $T$ that are s-t rigid. The following theorem provides a simple sufficient condition for s-t rigidity. The condition is formulated solely in terms of the connectivity of $T$, and apply to all \emph{generic} $\mP$, that is $\mP$ where every 4 landmarks are not co-planar.
		\begin{theorem}\label{thm:rigidity}
			A 2-connected triangulation $T$ with chordless cycles of length at most 4 is scale-translation rigid. 
		\end{theorem}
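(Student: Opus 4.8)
\section*{Proof proposal for Theorem~\ref{thm:rigidity}}

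The plan is to chain the two results already in hand and then run a combinatorial propagation driven by the two hypotheses. By Proposition~\ref{prop:rigidity2}, s-t rigidity is equivalent to full column rank of \eqref{e:scale_translation}--\eqref{e:P0}, and by Theorem~\ref{thm:st_generic} it is enough to certify this at a single generic $\mP$; in fact the rank estimates below use nothing about $\mP$ beyond non-coplanarity of every four landmarks, so the argument certifies rigidity for \emph{all} generic $\mP$ at once. Full column rank is equivalent to the homogeneous system \eqref{e:scale_translation} having only the $4$-dimensional space of global scalings and translations as solutions; equivalently, after imposing \eqref{e:P0}, the embedding $q$ is uniquely determined. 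So I would fix a solution $(a_{\scriptscriptstyle P},b_{\scriptscriptstyle P},q)$ and show that the normalization at $P_0$ forces every $q_l$.

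The first step is to read \eqref{e:scale_translation} geometrically. Since each triplet $(r_{\scriptscriptstyle P,i},r_{\scriptscriptstyle P,j},r_{\scriptscriptstyle P,k})$ is a centered, \emph{uniformly} scaled copy of the landmark triangle $(p_i,p_j,p_k)$, the per-face constraint says exactly that the candidate $q$ restricted to each face is a scaling $+$ translation of that landmark triangle \emph{with no rotation}. Hence for an edge $(ij)$ carried by a face $P$ one has $q_i-q_j=\mu_{\scriptscriptstyle P}(p_i-p_j)$ for a single scalar $\mu_{\scriptscriptstyle P}$ proportional to $a_{\scriptscriptstyle P}$, and an edge shared by two faces forces their scalars to agree. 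Call a vertex \emph{pinned} once its $q$-value is forced; the three vertices of $P_0$ are pinned by \eqref{e:P0}, and the engine of the proof is a cycle move that enlarges the pinned set.

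Let $C$ be a chordless cycle of $T$ containing two pinned vertices whose not-yet-pinned vertices form an arc of $t$ edges. Telescoping \eqref{e:scale_translation} along that arc writes the (known) difference of the two pinned endpoints as $\sum_{e}\mu_{e}\,(p_{e^{+}}-p_{e^{-}})$, three scalar equations in the $t$ unknown edge-scalars. Because $C$ is chordless of length at most $4$ we have $t\le 3$, and because the at most four landmarks on $C$ are non-coplanar (hence no three of them collinear) the edge-difference vectors $p_{e^{+}}-p_{e^{-}}$ are linearly independent; the three equations then determine the $\mu_{e}$ uniquely and thereby pin every interior vertex of the arc. This is precisely where both hypotheses enter: a chordless cycle of length $5$ or more would pit four or more unknowns against only three equations and could flex, while coplanarity of four cycle-vertices would drop the coefficient rank below $3$.

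Finally I would globalize using $2$-connectivity, which by Whitney's theorem supplies an open-ear decomposition of the $1$-skeleton. Seeding the determined region at $P_0$, I would grow it by the cycle move, arranging the ear/cycle decomposition so that while unpinned vertices remain there is always a chordless cycle --- necessarily of length at most $4$ --- meeting the determined region in at least two vertices and carrying the new vertices on one arc. Because $2$-connectivity forbids cut vertices, no part of the structure can hang off the rest at a single point (which would leave its scalar free), so the propagation cannot stall before every vertex is pinned; the resulting $q$ is the unique global scaling $+$ translation, giving full column rank and hence s-t rigidity. I expect the main obstacle to be exactly this last combinatorial step: converting ``$2$-connected with chordless cycles of length at most $4$'' into the guarantee that the determined region can always be enlarged by a short chordless cycle sharing two vertices with it --- that is, organizing the decomposition so each attached piece has at most two interior vertices --- and verifying that the attendant scale system stays full rank through non-coplanarity at every stage.
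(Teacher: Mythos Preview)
Your proposal follows essentially the same route as the paper. The paper also proves the result by growing an s-t rigid subgraph one chordless cycle at a time: its Lemma~1 shows that a generic chordless cycle of length $\leq 4$ is s-t rigid once one edge is fixed (this is your cycle move, argued there via the one-dimensional kernel of the $3\times l$ edge-vector matrix $U$ rather than your telescoping, but the content is the same), and then absorbing such a cycle into the current rigid subgraph $G'$ along a shared edge keeps $G'$ rigid.

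Where the two diverge is precisely the combinatorial step you flag as the obstacle. The paper does \emph{not} go through an ear decomposition; instead it proves a short cycle-shortening lemma (Lemma~2): whenever some simple cycle contains an edge of $G'$ and a vertex outside $G'$, so does some \emph{chordless} cycle --- just repeatedly split along a chord, each time keeping the half that still carries both an $E_{G'}$-edge and an outside vertex. Two-connectivity then furnishes the seed cycle: given $v_i\notin V_{G'}$ adjacent to $v_j\in V_{G'}$, pick any edge $e_{jk}$ of $G'$, take a $v_i$--$v_k$ path avoiding $v_j$ (available since $G\setminus v_j$ is connected), and close it with $e_{ij},e_{jk}$. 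So propagation never stalls. Your ear-decomposition plan would still need exactly this shortening step, since ears can be long and the induced cycle through an ear need not be chordless; Lemma~2 does that work directly and avoids the detour. One minor point: the paper attaches along a shared \emph{edge}, which makes ``union of two s-t rigid pieces sharing an edge is s-t rigid'' immediate; your ``two pinned vertices'' version is also valid (two distinct points already fix scale and translation), but the edge formulation dovetails cleanly with Lemma~2 as stated.
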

		Chordless cycles are cycles in the graph that cannot be shortened by an existing edge between non-consecutive vertices in the cycle. The theorem is proved in Appendix \ref{appendixA}. The inset shows three triangulations (from left to right): an s-t non-rigid $T$ due to failure of the 2-connectedness (at the yellow vertex, for example); s-t rigid $T$; and s-t non-rigid $T$ with a chordless cycle of length 5. 
		\begin{wraptable}[3]{r}{0.30\columnwidth}
	\vspace{-10.pt}\hspace{-14pt}
	\includegraphics[width=0.30\columnwidth]{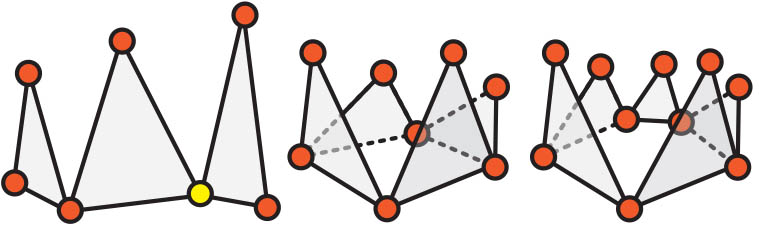}
	\vspace{-0cm}
\end{wraptable}		
		Several comments are in order: First, as shown in the inset there are graphs with chordless cycle of length 5 that are not s-t rigid and therefore the above theorem cannot be strengthened by simply replacing 4 with 5; second, the theorem can be strengthened by considering only cycles between s-t rigid components; third, the generic condition can be weakened by enforcing it only on chordless cycles. Lastly, the notion of s-t rigidity is related to the notion of parallel rigidity. A graph $G$ is parallel rigid if it does not have a non-trivial parallel redrawing, where parallel redrawing is a different graph $G'$ with edges parallel to the edges in $G$.  Necessary and sufficient conditions for parallel rigidity exist (see \eg, Theorem 8.2.2 in \cite{whiteley1996some}), however these conditions are harder to work with in comparison to Theorem \ref{thm:rigidity}.
		
		In this paper we use multi-chart structures $(\mP,T)$ with triangulations $T$ that satisfy the sufficient condition to s-t rigidity as described in Theorem \ref{thm:rigidity}. Figures \ref{fig:structure_and_charts}(a)-(b) show this multi-chart structure in the case of human body shape.

\subsection{Mesh to tensor data}
The multi-chart structure $(\mP,T)$ is used to transfer the input mesh collection $\mM$ into a collection of standard image tensor data as follows. 

We consider the coordinate functions over the meshes, $X=(x,y,z):M\too\Real^3$, and use our multi-chart structure $(\mP,T)$ to transfer these coordinate functions to images. Given a chart $P\in \mF$, we pull the coordinate functions to the flat torus $\mT$ via
\begin{equation}
	X_{P} = X\circ\Phi_{P}
\end{equation}
and sample it on a regular $k\times k$ grid of $[-1,1]^2$, where in this paper we use $k=65$. This leads to tensor input data $Y_{P} \in \Real^{k\times k \times 3}$. Figure \ref{fig:structure_and_charts}(c) shows three tensors $Y_{P}$ as colored square images for three different charts $P\in \mF$.  Concatenating all charts per mesh gives the final multi-chart tensor representation of mesh $M$,
\begin{equation}\label{e:Ys}
	Y\in\Real^{k\times k \times 3|\mF|}.
\end{equation}
$Y$ contains all geometric data for mesh $M$, and the entire input tensor data is $\set{Y^s}_{s=1}^m$.
Differently from images that contain $3$ channels, every instance of our data $Y$ contains $3|\mF|$ channels in $|\mF|$ groups. Each contains the three coordinate functions of the surface transferred to $\mT$ using a different chart. As discussed above,  since different charts have different mean and scale/variance (\eg, torso and head) it is important that the different channels in $Y$ are normalized, \ie, each $Y_{P}$ is centered and scaled to be of unit norm (variance). Otherwise the learning process is suboptimal for the small, non-centered charts, \eg, those corresponding to head and hands. Therefore, in our data we normalize all charts, $Y_{P}$. Of course, we can only do that if there is a unique way to recover scale and translation per chart, which is the case if the multi-chart triangulation is scale-translation rigid.

The charts, $\Phi_{P}$, map $[-1,1]^2$ to four copies of the surface $M$.
%
%
Accordingly, the tensor $Y$ also contains four copies of the surface's coordinate data. 
We denote by
\begin{equation}\label{e:y_Psl}
y_{\scriptscriptstyle{P},l}\in \Real^3, \quad l \in \set{i,j,k}
\end{equation}
the entries of $Y_{\scriptscriptstyle{P}}$ corresponding to (one copy of) a triplet of landmarks $P$ in $[-1,1]^2$.


%

\begin{figure}[t]
 \includegraphics[width=\columnwidth]{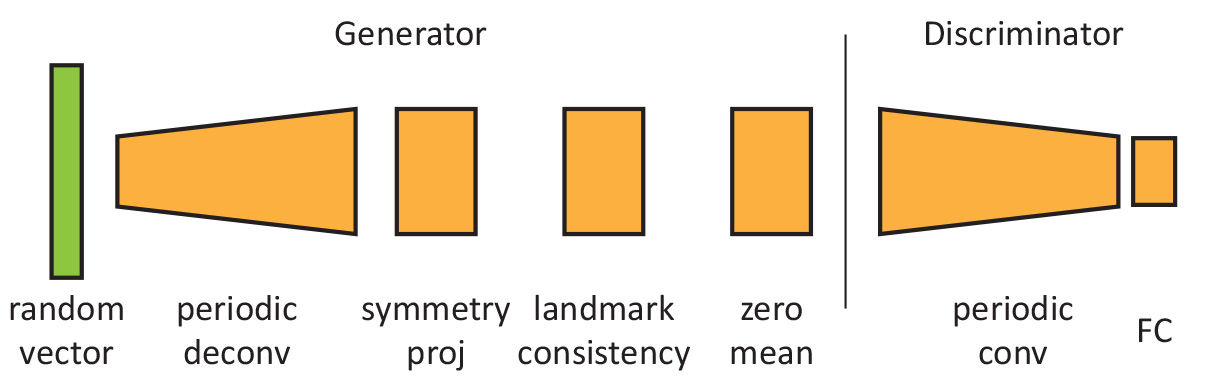}
  \caption{The generator and discriminator architecture. }\label{fig:generator}
\end{figure}

\subsection{Architecture and layers}\label{ss:arch} We apply the image GAN technique \cite{goodfellow2014generative} to learn our surface generator $G:\Real^d\too \Real^{k\times k \times 3|\mF|}$ from the input set of surfaces represented as tensors of the same dimensions, $\set{Y^s}_{s=1}^m\subset \Real^{k\times k \times 3|\mF|}$. The loss function used to train the generator is defined using a discriminator $D:\Real^{k\times k\times 3|\mF|}\too \Real$, which is also a deep network aiming to classify input multi-chart data $Y\in \Real^{k\times k \times 3|\mF|}$ to either a real surface, or a generated surface. The discriminator is fed with both real instances $Y^s$ and generated instances $Q=G(z)$ and optimizes a loss trying to correctly discriminate between the two.

In this paper we use a similar architecture to \cite{karras2017progressive} without the progressive growing part, that is, we do not change the resolution during learning. The loss we use is the Wasserstein loss \cite{gulrajani2017improved} .
 We apply the following changes to the network to adapt to our geometric setting. The architectures of the generator and discriminator are shown in Figure \ref{fig:generator} and more details can be found in Appendix \ref{appendixB}.

\paragraph{Number of channels.}
First, we change the number of output channels generated by $G$ to $k\times k \times 3|\mF|$ and rescale number of channels accordingly in both G and D, see Appendix \ref{appendixB} for all channel sizes.

\paragraph{Periodic convolutions and deconvolutions, symmetric projection.}
Second, similarly to \cite{maron2017convolutional}, all convolutions are applied with periodic padding to account for the original surface's topology. Deconvolutions are implemented by bilinear upsampling followed with a periodic convolution (as in \cite{karras2017progressive}). Furthermore, since we are working with four copies of the surface we also incorporate the (max) \emph{symmetry projection} layer after the convolution layers of the generator that makes sure all four copies are identical (again, as in \cite{maron2017convolutional}), see Figure \ref{fig:generator}.

\paragraph{Landmark consistency.}
Third, our data $Y^s$ is per-chart normalized and hence the generator will also learn (approximately) normalized charts $Q=G(z)\in\Real^{k\times k\times 3|\mF|}$, $z\in \Real^d$. One property that always holds for the data $Y$ is that there exists a unique scale and translation per chart $P$ that solves \eqref{e:scale_translation}-\eqref{e:P0} \emph{exactly}. We will therefore enforce this exactness condition on the generator output $Q$.


We implement a layer, called \emph{landmark consistency}, that given a generated tensor $Q\in \Real^{k\times k \times 3|\mF|}$ extracts the landmark values $y\in \Real^{3\times 3 \times |\mF|}$ (as in \eqref{e:y_Psl}), and solves the linear system \eqref{e:scale_translation}-\eqref{e:P0} with $r=y$, in the \emph{least-squares sense}. Note that Theorem \ref{thm:st_generic} implies the existence of a unique solution in this case, almost always. Then, we transform each triplet $y_{\scriptscriptstyle{P}}$ in $y$ by the respective scale and translation, obtaining new locations for the landmarks denoted $\hat{y}$, replacing each landmark value $\hat{y}_{\scriptscriptstyle{P},l}$ with the average of all values corresponding to the same landmark, and transforming back by subtracting the translation and dividing by the scale, $\wt{y}$. Lastly we replace the values $y$ in $Q$ with the new, consistent values $\wt{y}$.

\paragraph{Zero mean.}
Lastly, a \emph{zero-mean} layer is implemented, reducing the mean of every chart in the generated tensor $Q$. As mentioned above, this condition is also satisfied by our train data $\set{Y^s}$.

\begin{figure}[t]
	\includegraphics[width=\columnwidth]{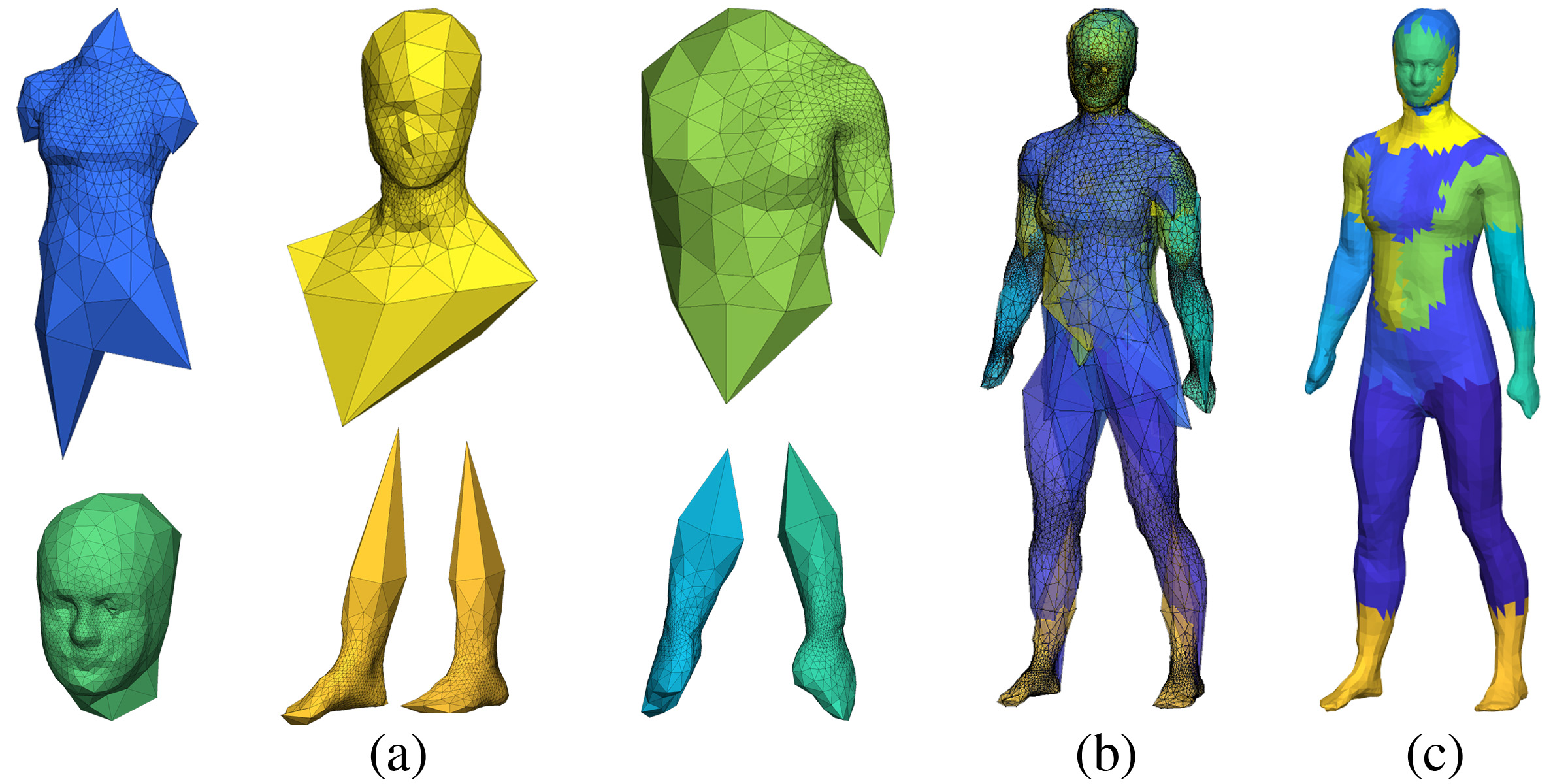}
	\caption{Reconstruction of a full shape from multiple charts. (a) several generated charts. (b) All charts after solving for scales and translations. (c) Reconstructed mesh with color coding of the charts with maximal scale used for each point.} \label{fig:reconstruction}
\end{figure}

\subsection{Reconstruction}
The last part in our pipeline is reconstructing a surface $M$ from the generative model output $Q=G(z)\in \Real^{k\times k\times 3|\mF|}$. The reconstruction includes two steps: (i) recover a scale and translation per chart in $Q$; and (ii) extract vertex coordinates of a template mesh $M^{t}$ from $Q$.

\paragraph{Recover scale and translations.}
The first step in reconstructing a surface $M$ out of the generator output $Q$ is to recover a scale $a_{\scriptscriptstyle{P}}$ and translation $b_{\scriptscriptstyle{P}}$ per chart $Q_P\in \Real^{k\times k \times 3}$. This is done by solving the linear system \eqref{e:scale_translation}-\eqref{e:P0} where $r=y$ are the landmark values from the different charts in $Q$. Since our network includes a landmark consistency projection layer (see Subsection \ref{ss:arch}) there exists an exact solution to this system. The solution to this system is unique due to the scale-translation rigidity of $T$. Let $\wh{Q}$ denote the scaled and translated charts of $Q$ by the scales and translation achieved by solving the linear system. Figure \ref{fig:reconstruction} (a) shows examples of the different charts in $Q$; and (b) shows the different charts of $\wh{Q}$ embedded in $\Real^3$ after solving for and rectifying the scales and translations.

\paragraph{Template fitting.}
In the second stage of the reconstruction process we use as template mesh, $M^t=(V^t,E^t,F^t)$, a per-vertex average of the rest-pose models in DFAUST \cite{bogo2017dynamic}.  We reconstruct the final mesh $M=(V,E,F)$ using data from $\wh{Q}$. We use the connectivity of $M^t$ (\ie, $E^t$ and $F^t$) and set the vertices location using the multi-chart structure $(\mP,T)$ as follows,
\begin{equation}\label{e:reconstruction}
	v = \frac{\sum_{P\in \mF} \tau_{\scriptscriptstyle{P}}(v)\, \wh{Q}(\Phi^{-1}_{P}(v))}{\sum_{P\in \mF} \tau_{\scriptscriptstyle{P}}(v)}
\end{equation}
where $\tau_{\scriptscriptstyle{P}}(v)$ is the inverse area scale of the $1$-ring of vertex $v$ exerted by chart $\Phi_{P}$ of the template mesh, $M^t$; $\wh{Q}(\Phi^{-1}_{P}(v))$ is the image of the point $u=\Phi^{-1}_{P}(v)$ under the learned chart $\wh{Q}$, computed by bilinear interpolation of $\wh{Q}$ in each of its grid cells. Equation \ref{e:reconstruction} makes sense since each point's coordinate is mainly influenced by the charts that represent it well. Figure \ref{fig:reconstruction}(c) shows the final reconstruction $M$ with color coding of the charts with maximal scale used for each point; note the similarity of the (b) and (c).



\section{Implementation Details}

\subsection{Datasets}
\paragraph{Humans} The training set we have used for human body generation consists of two large datasets of human models: DFAUST \cite{bogo2017dynamic} and CAESAR \cite{yang2014semantic}. The DFAUST dataset contains $40k$ models in multiple body poses of ten different people. The CAESAR  dataset compliments DFAUST and contains about $3k$ models of different people in rest pose. Both datasets are aligned internally. We align both datasets by removing the mean from each shape, scaling it to have a surface area of $1$ and solving for the optimal rotation to fit a set of landmarks between the datasets using Singular-value decomposition (SVD). As each of these datasets has consistent vertex numbering, we manually select the set of landmarks $\mP$ on a single model from each dataset. 
For this shape class we used a multi-chart structure that consists of 16 triangles and 21 landmarks. This is demonstrated in figure \ref{fig:structure_and_charts}.
In order to make our training set balanced we chose 8244 models from DFAUST (by taking each fifth shape) and doubled the number of CAESAR models to 5750.

\paragraph{Bones}  We also evaluated our method on anatomical surfaces \cite{boyer2011algorithms}. We used 70 models and as in \cite{maron2017convolutional} we converted the meshes to sphere-type topology. We also extrinsically aligned the teeth using their landmarks. For this shape class we used a multi-chart structure that consists of 4 triangles and 6 landmarks.

\begin{figure}[t]
	\includegraphics[width=\columnwidth]{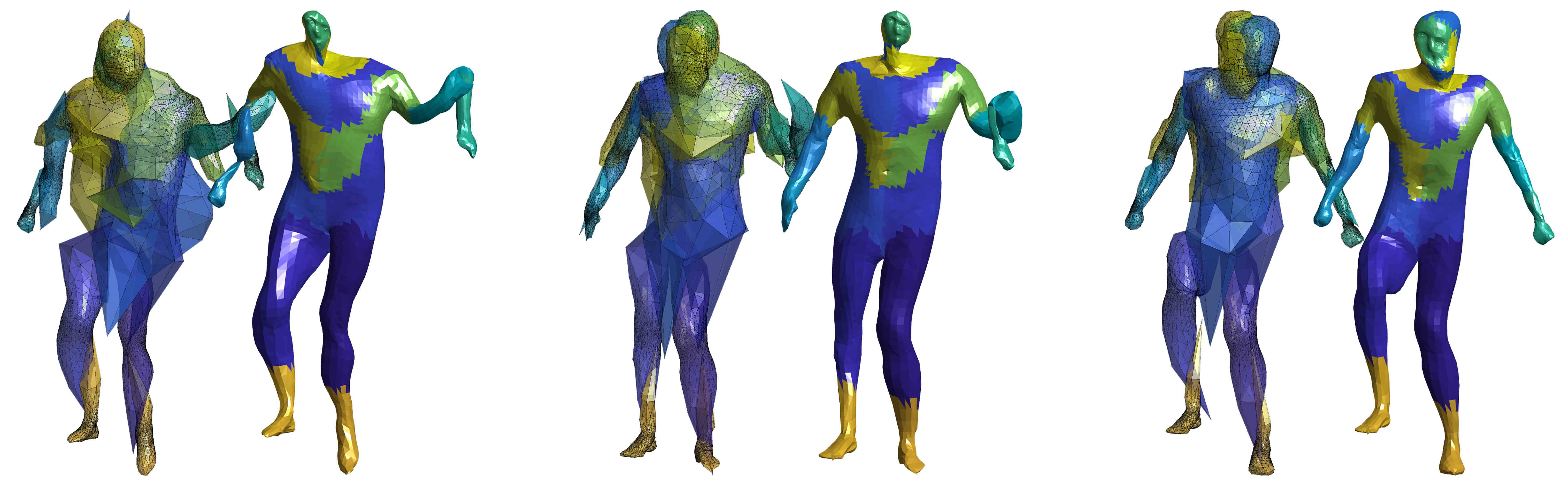}
	\caption{A naive single-chart surface generation approach with \cite{maron2017convolutional}. In each pair: left - generated charts; right - reconstructed surface. Although generating individual charts of high quality, the different charts are learned independently and consequently do not fit. } 
	\label{fig:maronComparison}
\end{figure}

\subsection{Training details}
We implemented the networks using the TensorFlow library \cite{tensorflow2015-whitepaper} in python. For the larger network that generates human models, we perform synchronous training on 2 NVIDIA p100 GPUs and for the smaller network that generates teeth we use a single p100 GPU. During training, we alternate between processing a batch for the generator and processing a batch for the discriminator. One epoch takes $\sim240$ sec and $\sim0.25$ sec for the humans and teeth networks respectively. The networks converge after 800  and 80k epochs for the humans and teeth respectively. Generating a new surface takes $1.03$ sec, from which the feed-forward takes $0.03$ sec on a single p100 GPU and the reconstruction takes $1$ sec (CPU).

Due to noise in $Q$ during the learning process we start the training with no landmark consistency layer. After 50 or 10k epochs for humans and teeth, respectively, we add the landmark consistency layer. To avoid bias in scale and translation we randomize the fixed chart $P_0$ at each iteration. Furthermore, to overcome numerical instabilities we add a regularization term to the least-squares \eqref{e:scale_translation}-\eqref{e:P0}  system of the form
\begin{equation}\label{e:regularize}
\lambda \sum_{P\in \mF} (a_{\scriptscriptstyle{P}}-\bar{a}_{\scriptscriptstyle{P}})^2,
\end{equation}
where $\lambda$ is a parameter and $\bar{a}_P$ is the average scale of the $P^{\mathrm{th}}$ chart as computed in a preprocess across the entire data $\set{Y^s}$. We set $\lambda=10$ for the next 450 epochs and then reduced $\lambda$ by a multiplicative factor of $0.995$ every epoch, until a total of 800 epochs is reached. For the teeth generating network the addition of the regularization term was not needed.

\section{Evaluation}\label{s:evaluation}

\begin{figure}[t]
		\begin{tabular}{c}
		\includegraphics[width=\columnwidth]{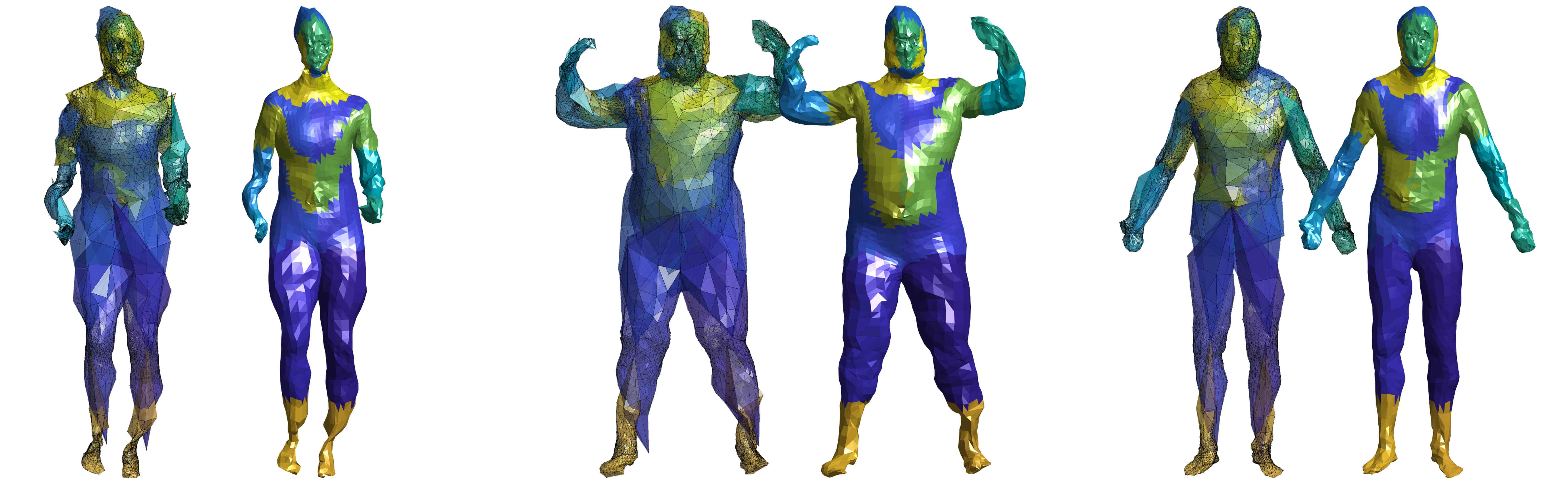}		\\
		No normalization \\ 
		\includegraphics[width=\columnwidth]{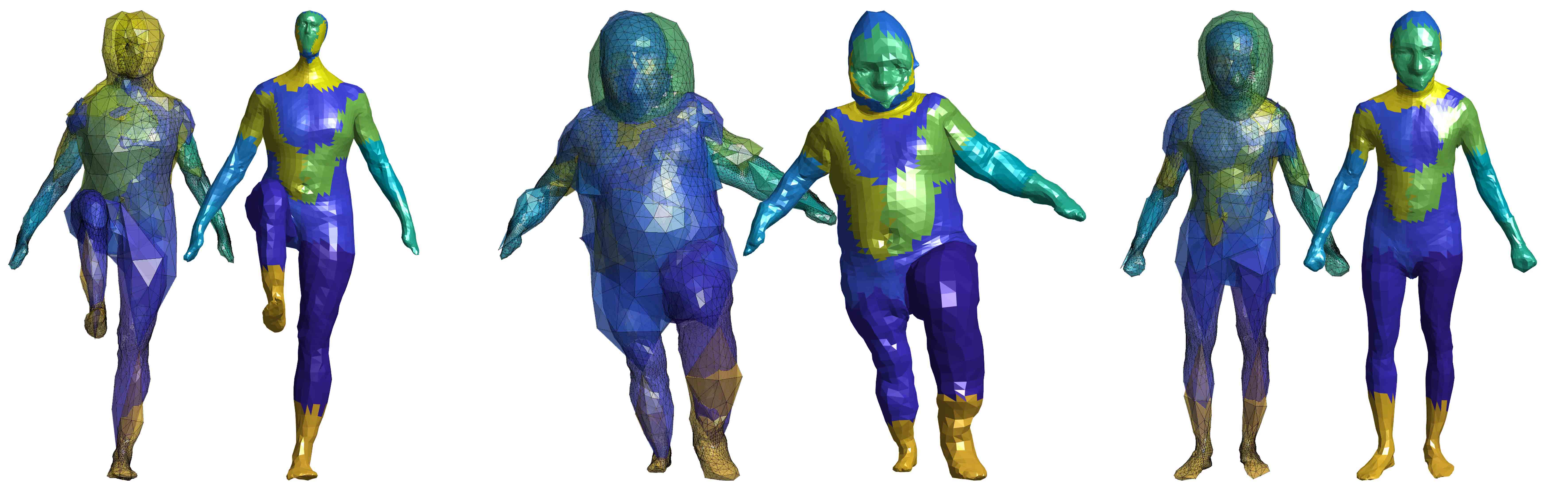} 		\\
		No projection \\
		\includegraphics[width=\columnwidth]{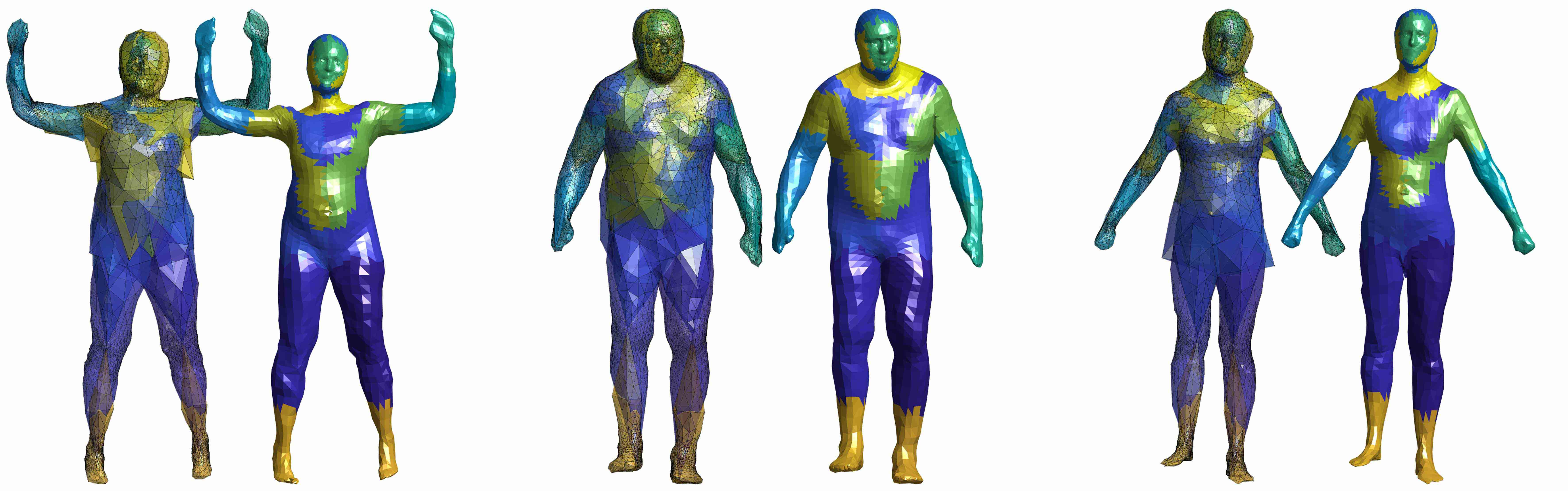}\\
		Ours \\
	\end{tabular}
	\caption{Comparison with two variations of our algorithm. In each pair, the left model shows all the individual generated charts and the right model is a final reconstruction.} 
	\label{fig:methodEval}
\end{figure}

\begin{figure*}[t]
	\includegraphics[width=\textwidth]{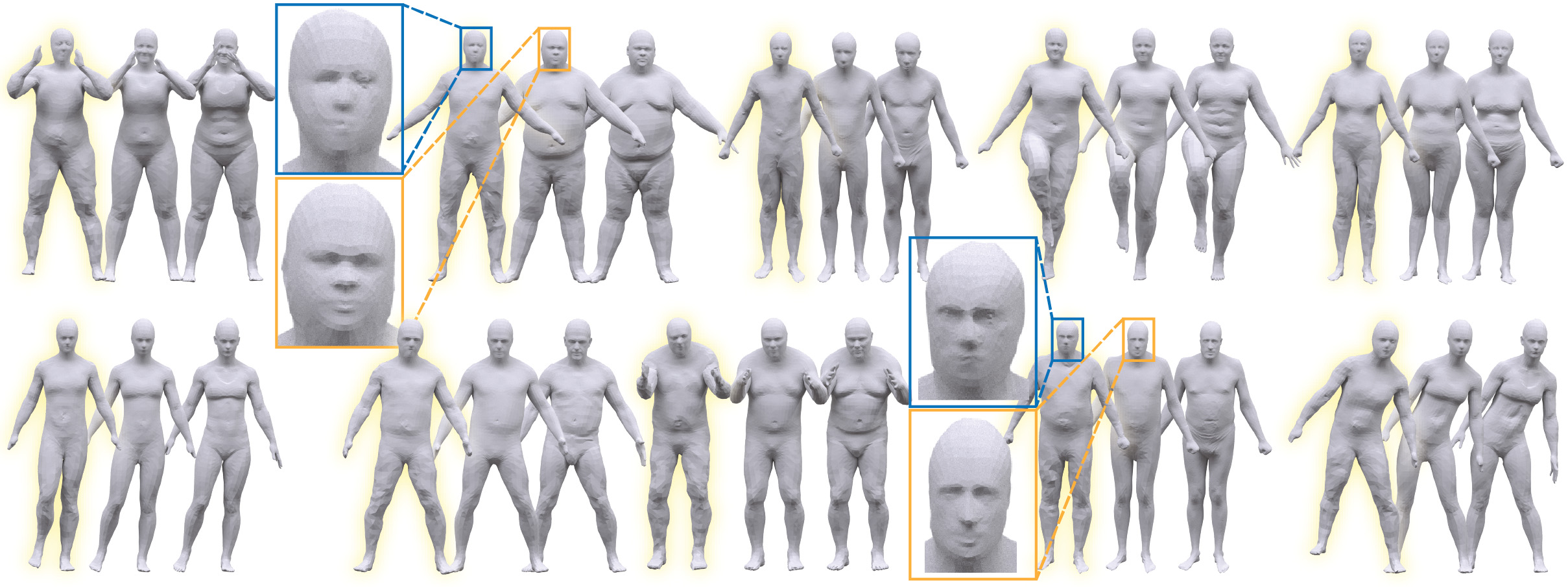}
	\caption{Comparison of human models generated by our method (left in each triplet) and their nearest neighbors in the training set (middle and right in each triplet). In the middle of each triplet we show the nearest training model reconstructed from its charts $Y$ using our reconstruction pipeline; on the right we show the original surface mesh from the dataset. The blow-ups emphasis the differences between generated and real face examples. } \label{fig:comparisonVSNNFullBody}
\end{figure*}

In this section we compare our method to several baseline methods, all of which are variations of our approach. We also present a nearest neighbor evaluation, for testing the ability of our method to generate novel shapes.

\subsection{Single-chart surface generation}
A naive adaptation of the approach presented in \cite{maron2017convolutional} to surface generation is to train a network that generates a single chart at each feed-forward and stitch the generated charts in a postprocess. The output of G of this network is a single chart of dimensions $\Real^{k\times k\times 3}$ and the capacity of the network was reduced compared to the multi-chart network accordingly. We trained this network using the same data we used for our method, feeding a random chart at each iteration.  
Figure \ref{fig:maronComparison} shows a few typical examples generated using this approach. In order to generate the first two models (left and middle) we selected random charts until we had all 16 necessary charts. For the last model we cherry-picked specific charts that seemed to fit reasonably. In all cases we ran our reconstruction algorithm, with the exception of using the mean charts' scales and solving only for the translations (solving for the scales as well resulted in worse results). This comparison shows that different charts of the same shape should be jointly learned.

\subsection{Chart normalization and landmark consistency}
We compared our method to two other baseline methods:  (a) Learning the multi-chart structure without chart normalization (centering and scale), (b) Learning the multi-chart structure without the landmark consistency layer (as described in \ref{ss:arch}).
Figure \ref{fig:methodEval} compares baselines (a)-(b) to our method by depicting several typical examples. The first row shows baseline (a), the second row baseline (b) and the third row our algorithm (with normalization and landmark consistency). Note that both the normalization step and the landmark consistency layer are important in order to generate smooth and consistent results.

\subsection{Nearest neighbor evaluation} 
In order to test the method's ability to generate unseen shapes, we apply our trained generator $G$ to multiple random latent variables $G(z)$, $z\in\Real^d$ and compare the resulting charts to their nearest neighbor in the training data $\set{Y^s}_{s=1}^m$ using $L_2$ norm in $\Real^{k\times k \times 3|\mF|}$. In the experiment, shown in Figure \ref{fig:comparisonVSNNFullBody}, we show: left, the reconstructed $M$ from the generated example $G(z)$; middle, the closest model $M^s\in \mM$ in the training set reconstructed from its charts $Y^s$ using our reconstruction pipeline; right, the closest model $M^s$ in its original surface form.

%

\section{Results}

 \begin{figure*}[!ht]	
 	\begin{tabular}{c}
\includegraphics[width=\textwidth]{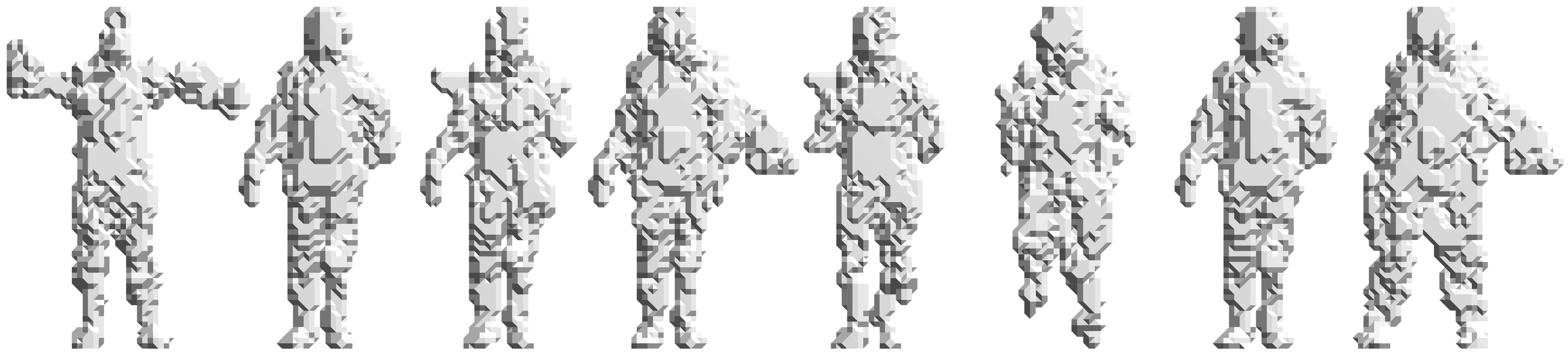}		\\
 		Volumetric GAN \\  
\\
 		\vspace{10pt}
\includegraphics[width=\textwidth]{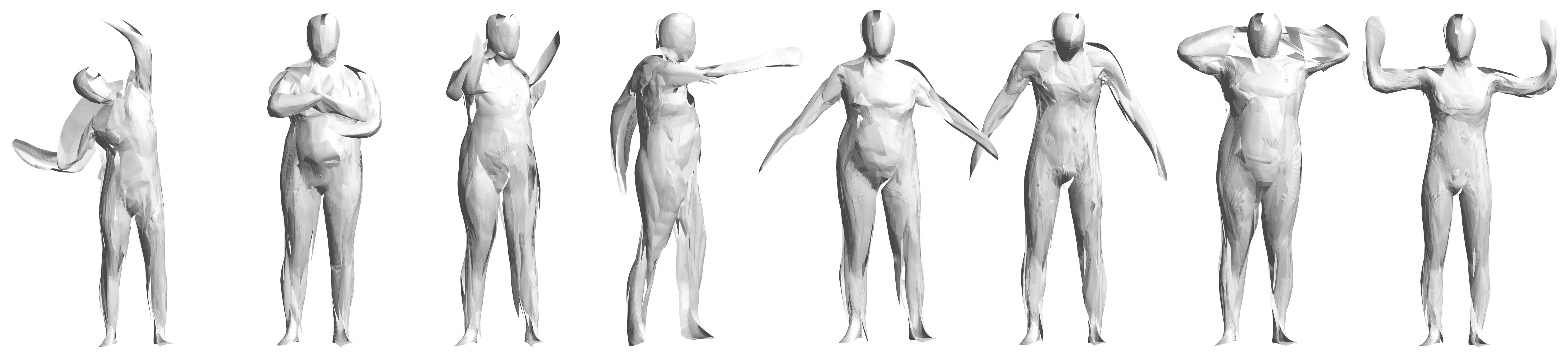} 		\\
 		AtlasNet \cite{Groueix18} \\ \\
 		
 		 		\vspace{10pt}
 		\includegraphics[width=\textwidth]{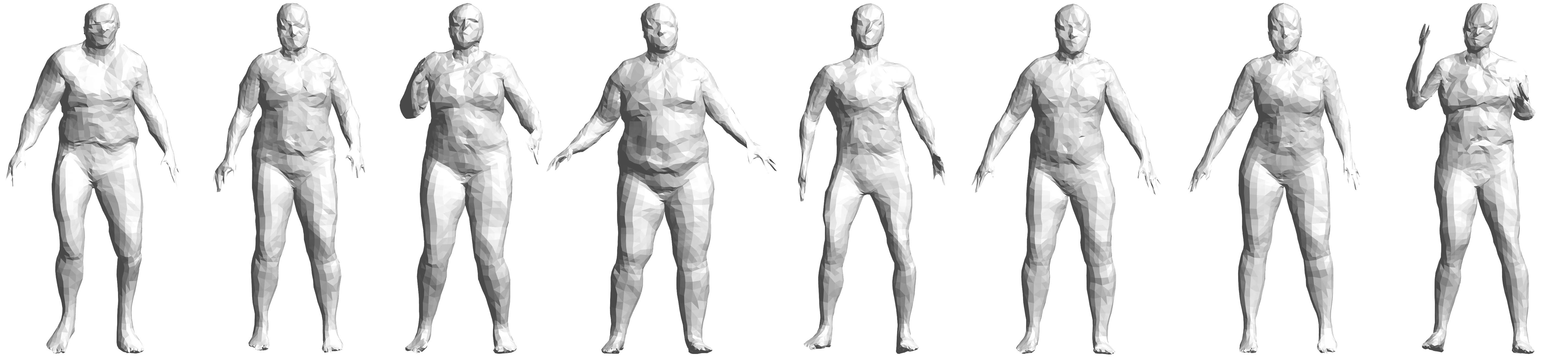} 		\\
 		Litany \etal  ~\cite{litany2017deformable} \\ \\		
 		\vspace{10pt}
 		
\includegraphics[width=\textwidth]{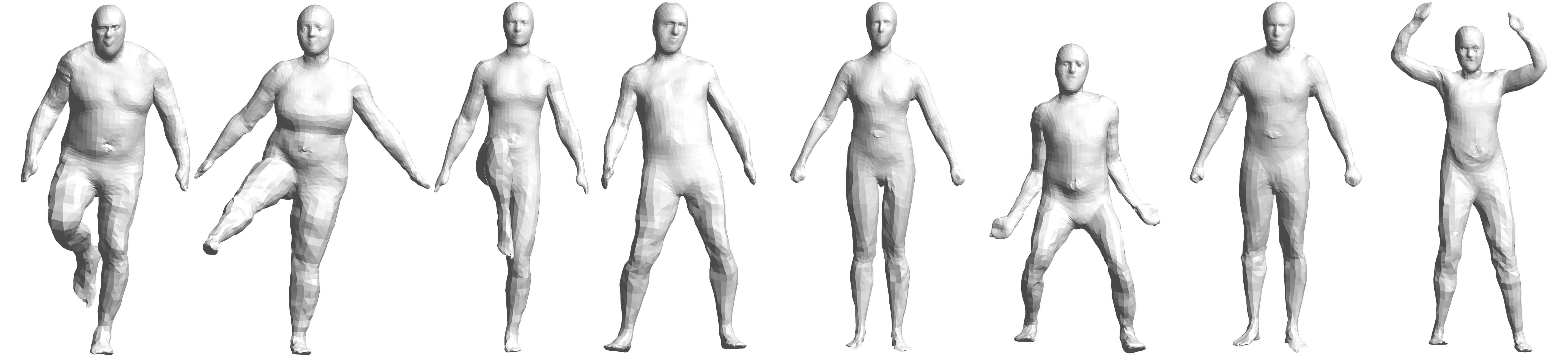}\\
 		Ours  \\	\\ 	
 		
 	\end{tabular}	
	\caption{Human shape generation. Comparison of our method with volumetric GAN baseline, the approach of \protect{\cite{Groueix18}} and of \cite{litany2017deformable}. }
\label{fig:comparisonToOtherMethods} 
\end{figure*}

\subsection{Comparison with alternative approaches}
We compare our method with a volumetric GAN approach, a recent approach by \cite{Groueix18} and the approach of \cite{litany2017deformable}. Figure \ref{fig:comparisonToOtherMethods} shows 8 results generated with each approach.

The volumetric method is implemented according to \cite{wu2016learning} with $64^3$ resolution (comparable to our $64\times 64\times 48$ tensors). The volumetric generator tends to produce crude, brick-like approximations of the surface shapes, hindering representation of specific body details.
The results of \cite{Groueix18} were provided by the authors and were trained only on FAUST (200) models \cite{bogo2014faust}. Although this is a smaller dataset than the one we used, the differences between the level of details and surface fidelity are clear.  
The results of \cite{litany2017deformable} were provided by the authors and are obtained by training on the DFAUST dataset \cite{bogo2017dynamic}. Note that their variational autoencoder was trained for a different task - shape completion. For this task they have explicitly relaxed the gaussian prior during training which (as mentioned by the authors) might give rise to generation of  slightly unrealistic shapes. 

\begin{figure*}[h]
	\includegraphics[width=\textwidth]{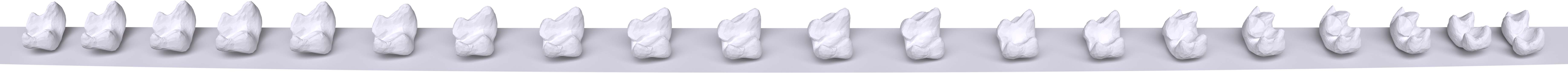}
	\caption{Equispaced  interpolation between two teeth  models. } \label{fig:teethIterpolation} 
\end{figure*}

\subsection{Shape interpolation}
Our method learns a map from the latent variable space $\Real^d$ to shape space $\Real^{k\times k \times 3|\mF|}$. This gives us the ability to perform interpolation between two generated shapes $G(z_1),G(z_2)$. Figure \ref{fig:Humannterpolation} shows equispaced samplings of a latent space line segment $[z_1,z_2]$ between two humans in different poses and body characteristics. Note how the models change in a continuous manner through other, natural models and poses. Figure \ref{fig:teethIterpolation} shows a similar experiment with the teeth surface dataset. The supplementary movie shows interpolation between different humans (and teeth) in the latent space.

\subsection{Shape exploration}
In this experiment, shown in Figures \ref{fig:gridSamplingHuman}, \ref{fig:gridSamplingTeeth}, we computed a 2D grid using bilinear interpolation on four latent vectors $z_1,z_2,z_3,z_4\in\Real^d$ and generated the corresponding models. Note how the grid captures gracefully the pose space. These types of grids can be used as means to browse datasets and shape spaces.

Failure cases: Figure \ref{fig:fail} shows the result of an experiment of 100 random models generated by our method, where failures are marked in red. Note that the ratio of failures is less than $5\%$, and in general the failures are also rather plausible human shapes. 
 
\begin{figure*}[h]
	\includegraphics[width=\textwidth]{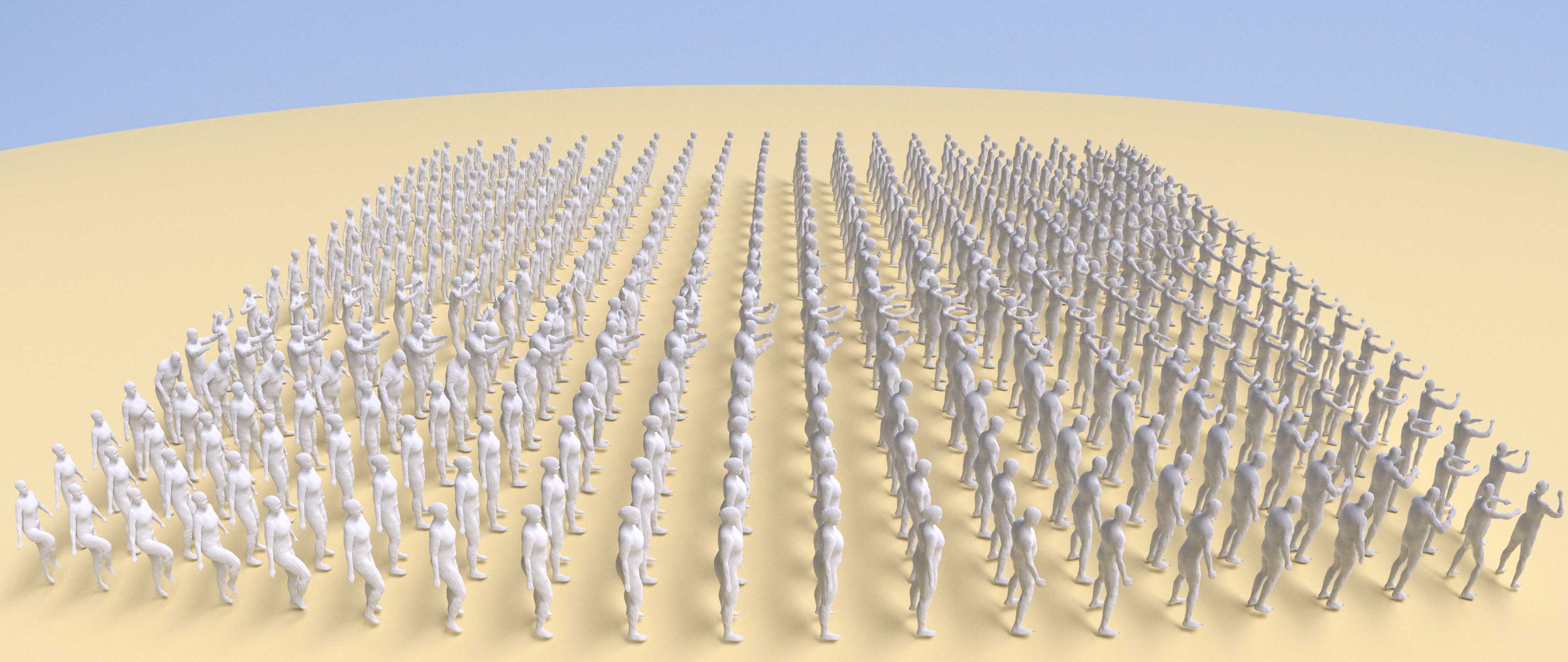}
	\caption{Bilinear interpolation of four generated human models. \vspace{10pt}} \label{fig:gridSamplingHuman} 
	\vspace{-20pt}
\end{figure*}
\begin{figure*}[h]
	\includegraphics[width=\textwidth]{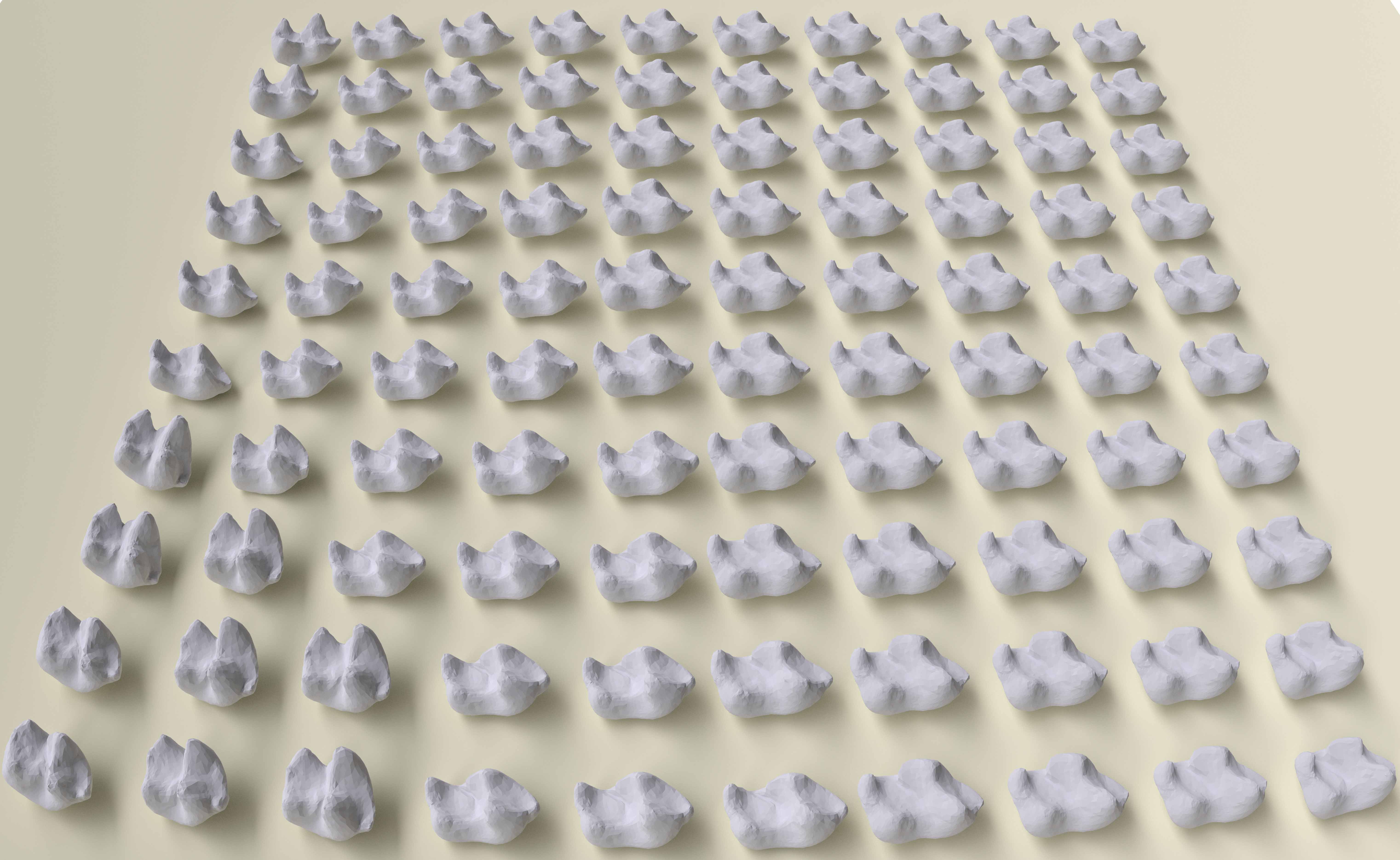}
	\caption{Bilinear interpolation of four generated teeth models. \vspace{10pt}} \label{fig:gridSamplingTeeth} 
	\vspace{-20pt}
\end{figure*}

\begin{figure*}
	\includegraphics[width=0.95\textwidth]{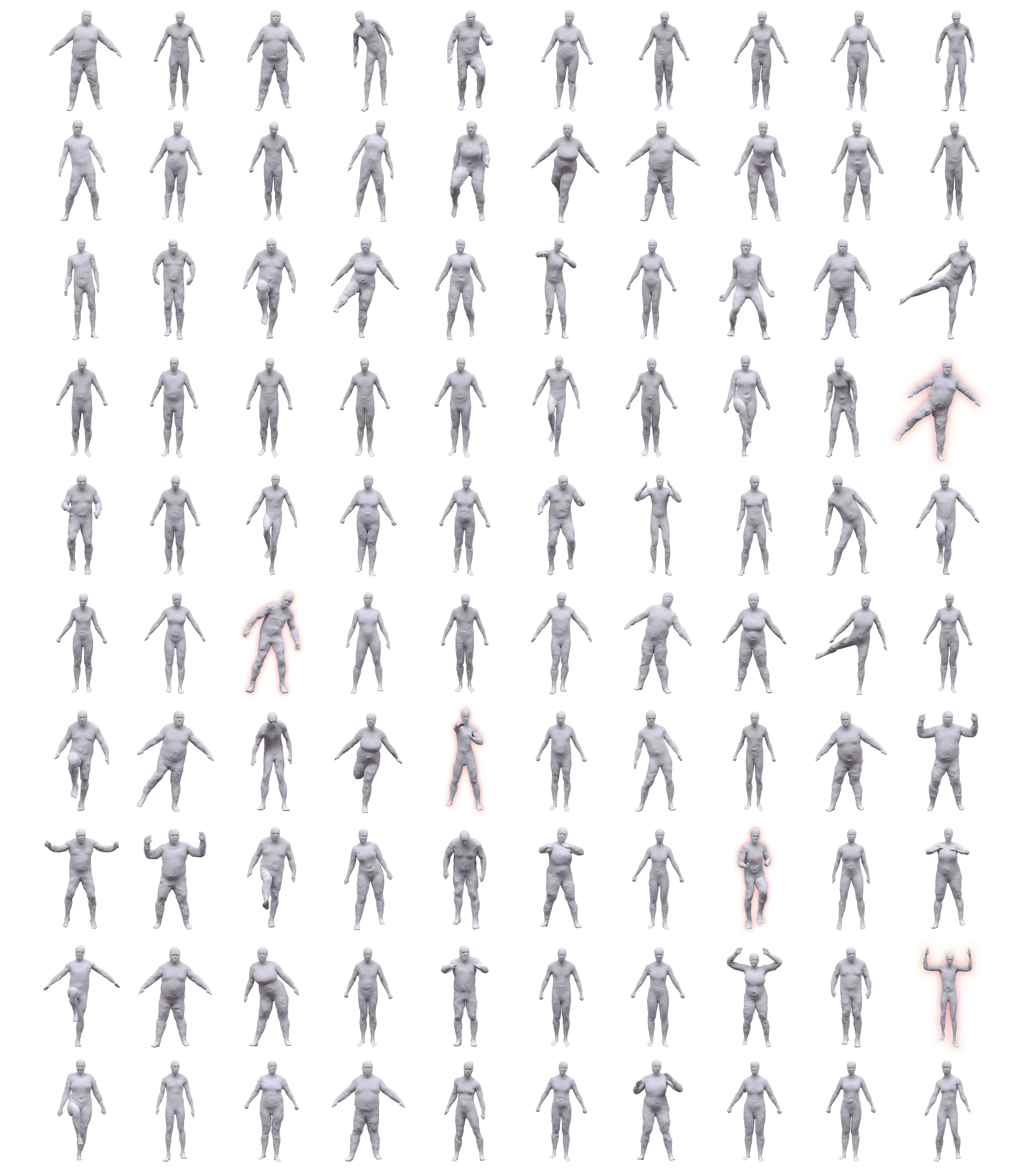}	
	\caption{100 random human shape generation with our method. The failures are shown in red. }\label{fig:fail}
\end{figure*}
\vspace{10pt}

\subsection{Massive-scale data generation.}  Lastly, our method can be used for massive generation of plausible random models. Figure \ref{fig:massive} shows $10k$ human models generated by our method, completely automatically. Note the diverse poses and different faces our method is able to generate without human intervention.

\begin{figure*}[h]
	\includegraphics[width=\textwidth]{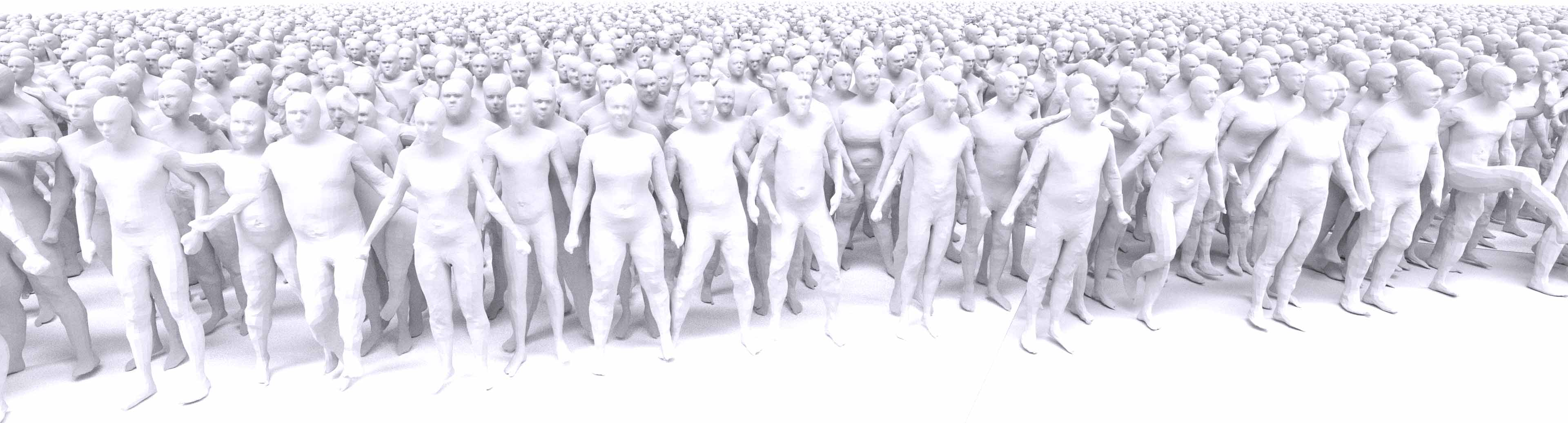}
	\caption{Massive data generation of random 10,000 human models.} \label{fig:massive}
	\vspace{-10pt}
\end{figure*}

\section{Conclusions} In this paper we present a new method for generating random shapes based on a novel 3D shape representation called multi-chart structure. 

The main limitation of our approach is the fact it is restricted to work only with genus-zero (\ie, sphere-type) surfaces. It would be an interesting future work to generalize the method to arbitrary shape topologies, triangle soups and even point clouds. Although opted for conformal mappings, we feel that other parameterization methods (\eg~ area-preserving maps which are used in geometric deep learning \cite{sinha2017surfnet}) can greatly benefit from our multi-chart representation as-well. Furthermore, we could use our representation with other deep generative models such as variational autoencoders (VAEs).

 Currently the reconstruction of the final mesh from the generated charts is done using a fixed template. An interesting future work is to devise more generic ways to reconstruct the final surface mesh from the charts, maybe even incorporate this task into the network. Lastly, we would like to generalize our work to conditional generative models which will allow additional user control of the generated shapes. 

\section{Acknowledgements}
This research was supported in part by the European Research Council (ERC Consolidator Grant, "LiftMatch" 771136), the Israel Science Foundation (Grant No. 1830/17). We would like thank the authors of AtlasNet \cite{Groueix18} and of \cite{litany2017deformable} for sharing their results for comparison.
 
\bibliographystyle{abbrv}

\bibliography{paper}


\appendix

\section{Proofs}\label{appendixA}


To prove Theorem \ref{thm:rigidity} we will prove a more general result dealing with scale-translation rigidity of graphs with respect to per-edge scale and translation. That is, we consider graphs $G=(V_G,E_G)$ where each edge can only be scaled and/or translated, but not rotated. 

%
%
\begin{theorem} \label{thm:graph_rigidity}
	Every generic embedding $q\in\Real^{n\times 3}$ of a 2-connected graph $G=(V_G,E_G)$ with chordless cycles of length at most 4 is unique up to global scale and translation. 
\end{theorem}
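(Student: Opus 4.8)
The plan is to analyze an arbitrary scale--translation redrawing $q'$ of $q$ through a single scalar per edge and to show this scalar field is forced to be constant. Since the embedding is generic, $q_i\neq q_j$ on every edge $(ij)\in E_G$, so the redrawing condition ``$q'_i-q'_j$ parallel to $q_i-q_j$'' can be written as $q'_i-q'_j=a_{ij}(q_i-q_j)$ for a unique symmetric scalar $a_{ij}\in\Real$. Because only differences enter, translations are automatically free, and it suffices to prove that all $a_{ij}$ agree: if the common value is $\alpha$ then $q'=\alpha q+\beta$ for some $\beta\in\Real^3$, which is exactly uniqueness up to global scale and translation.

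The local engine is the closing condition around cycles. For any cycle $v_1v_2\cdots v_Lv_1$ the telescoping sum of $q'_{v_{t+1}}-q'_{v_t}$ vanishes, giving $\sum_t a_{v_tv_{t+1}}(q_{v_{t+1}}-q_{v_t})=0$. I would first prove that on a \emph{chordless} cycle this forces all of its edge scalars to be equal, and this is where genericity is consumed. For a triangle the three edge vectors sum to zero and span a $2$-dimensional subspace (the triangle is non-degenerate for generic $\mP$), so their only linear relation is the all-ones one; for a chordless $4$-cycle on four non-coplanar landmarks the four edge vectors $e_1,e_2,e_3,e_4$ have rank $3$ and satisfy $e_1+e_2+e_3+e_4=0$, so again the relation space is one-dimensional and spanned by $(1,1,1,1)$. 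In both cases the closing condition says $(a_\cdot)$ is such a relation, hence constant along the cycle.

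I would then bootstrap from chordless cycles to all cycles by induction on cycle length, and close using $2$-connectedness. Every $3$-cycle, and every chordless cycle, has length at most $4$ by hypothesis and is a base case handled above. A cycle $C$ of larger length cannot be chordless, so it has a chord $(x,y)$ joining non-consecutive vertices; this chord is an edge of $G$ and splits $C$ into two cycles, each consisting of one arc of $C$ together with the chord. Since $x,y$ are non-consecutive both arcs have at least two edges, so both child cycles are strictly shorter than $C$; by induction each carries a constant edge scalar, and because both contain the shared chord these two constants coincide, making all of $C$ carry a single scalar. Finally, a $2$-connected graph has the property that any two edges lie on a common cycle, so the scalars on any two edges are equal; thus $a_{ij}\equiv\alpha$ and $q'=\alpha q+\beta$.

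I expect the main obstacle to lie at the seam between the linear-algebraic base cases and the combinatorial induction. One must verify that the chord splitting is genuinely length-decreasing --- that a chord between non-consecutive vertices leaves two arcs of length at least two, so each child cycle is strictly shorter --- so that the induction is well founded and never needs a chordless cycle of length $\ge 5$, which the hypothesis forbids. Equally one must confirm that only length-$3$ and length-$4$ chordless cycles are ever invoked as base cases, so that the non-coplanarity genericity suffices to collapse their scalars; together these two points are what let the local equalities propagate across the entire graph.
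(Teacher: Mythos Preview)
Your argument is correct and shares the paper's base case almost verbatim: both reduce a chordless cycle of length $l\le 4$ to the kernel of the edge-vector matrix $U$, use genericity (non-coplanarity of any four landmarks) to force $\dim\ker U=1$, and conclude that the only closing relation is the all-ones vector. Where you diverge is in the propagation step. The paper grows an s-t rigid subgraph $G'$ one chordless cycle at a time: it proves a separate combinatorial lemma (their Lemma~2) guaranteeing that whenever $V_{G'}\ne V_G$ there is a chordless cycle containing an edge of $G'$ and a vertex outside $G'$, and it uses 2-connectedness to manufacture the simple cycle that Lemma~2 then shortens. You instead prove by induction on cycle length that \emph{every} cycle has a single edge scalar, using chord-splitting inside the induction rather than as a standalone lemma, and then invoke the standard fact that in a 2-connected graph any two edges lie on a common cycle. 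Your route is slightly more economical (no auxiliary growing process, no explicit Lemma~2), while the paper's route is more constructive in that it exhibits an ear-like decomposition of $G$ into chordless cycles. One small phrasing gap worth tightening: your case split into ``base cases'' and ``larger length'' skips the non-chordless $4$-cycle, but your chord-splitting handles it (two triangles) without change.
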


This result directly applies to triangulations, which are also graphs, however with less degrees of freedom as only scale and translation of a whole triangle is allowed. 

The general idea of the proof is to first show the theorem for short chordless cycles (Lemma \ref{lemma:1}) and then use it as a building block for proving s-t rigidity of more general graphs (Theorem \ref{lemma:2} and Theorem \ref{thm:graph_rigidity}). 

\begin{Lemma}\label{lemma:1}
	Every generic embedding $q\in\Real^{l\times3}$ of a chordless cycle $C=(V_C,E_C)$ of length $l\leq4$  is unique up to global scale and translation.
\end{Lemma}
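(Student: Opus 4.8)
The plan is to recast the statement as a claim about \emph{parallel redrawings} and reduce it to a one-dimensional kernel computation. Label the vertices of the cycle $1,2,\ldots,l$ in cyclic order, with edges $(i,i+1)$ (indices mod $l$), and set the edge vectors $v_i = q_{i+1}-q_i \in \Real^3$. In the per-edge scale-translation setting, a competing embedding $q'$ is admissible exactly when each edge of $q'$ is a scaled copy of the corresponding edge of $q$, i.e. there exist scalars $a_1,\ldots,a_l \in \Real$ with $q'_{i+1}-q'_i = a_i v_i$ for every $i$. Showing the embedding is unique up to global scale and translation amounts to showing that every such admissible $q'$ forces $a_1=\cdots=a_l$, the common value being the global scale and the global translation absorbing the remaining freedom. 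First I would record the clean correspondence: a scalar tuple $(a_i)$ arises from some $q'$ (determined up to a global translation by integrating the edge differences around the cycle) if and only if it satisfies the single closure relation obtained by summing the edge equations around the cycle.

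Summing the defining equations around the closed cycle gives $\sum_{i=1}^l a_i v_i = 0$, while the cycle itself closes up as $\sum_{i=1}^l v_i = 0$. Hence the admissible scale tuples are exactly the kernel of the linear map $L:\Real^l \to \Real^3$, $L(a)=\sum_{i=1}^l a_i v_i$, and this kernel always contains the all-ones vector $\mathbf{1}=(1,\ldots,1)$. The entire lemma therefore reduces to proving that, for generic $q$, one has $\ker L = \Real\,\mathbf{1}$; equivalently $\operatorname{rank} L = l-1$.

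The remaining step is the rank computation, carried out by the short case analysis $l\in\{3,4\}$ using genericity. Note $\operatorname{rank} L = \dim \operatorname{span}\{v_1,\ldots,v_l\}$, which equals the affine dimension of the point set $\{q_1,\ldots,q_l\}$. For $l=3$ a generic (non-collinear) triangle has affine dimension $2$, so $\operatorname{rank} L = 2 = l-1$; for $l=4$ a generic quadruple in $\Real^3$ is affinely independent (no four points coplanar, which is exactly the genericity assumption), so $v_1,v_2,v_3$ are linearly independent and $\operatorname{rank} L = 3 = l-1$. In both cases $\dim\ker L = l-(l-1)=1$, forcing $\ker L = \Real\,\mathbf{1}$ and hence $a_1=\cdots=a_l=\alpha$. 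Integrating the edge equations then yields $q' = \alpha q + \mathbf{1}\beta^\top$ for some $\alpha\in\Real$, $\beta\in\Real^3$, which is the desired global scale and translation.

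I expect the only real obstacle to be bookkeeping the genericity correctly: one must argue that the maximal-rank configurations (non-collinear for $l=3$, non-coplanar for $l=4$) are precisely what the generic assumption guarantees, and that these are exactly the conditions making $\dim\ker L$ drop to its minimum $1$. Everything else is linear algebra on a single cycle; the length restriction $l\le 4$ is what keeps the number of edges from exceeding the ambient dimension plus one, so that a single closure relation already pins down the kernel. This is also why the bound cannot be pushed to $l=5$: there $\operatorname{rank} L\le 3$ forces $\dim\ker L\ge 2$, leaving nontrivial redrawings, consistent with the length-$5$ counterexample noted after Theorem~\ref{thm:rigidity}.
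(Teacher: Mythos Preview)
Your proof is correct and follows essentially the same approach as the paper: set up the edge vectors, observe that admissible rescalings lie in the kernel of the map $a\mapsto\sum_i a_i v_i$, use genericity to argue that the affine dimension of $\{q_i\}$ is $l-1$ so this kernel is one-dimensional and spanned by $\mathbf{1}$, and conclude. The only cosmetic differences are that the paper fixes one edge rather than leaving the global scale $\alpha$ free, and states the affine-dimension claim uniformly rather than splitting into the $l=3$ and $l=4$ cases.
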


\begin{proof}[Proof of lemma \ref{lemma:1}]
	Consider a generic embedding $q\in\Real^{l\times 3}$ of a cycle $C$ of length $l\leq4$. Denote the embeddings of vertices of the chordless cycle by $\{q_i\}_{i=0}^l\subset\Real^3$  where $q_0=q_l$ and the set of vectors connecting neighboring vertices by $u_i=q_{i}-q_{i-1}$, $i\in [l]$. The set $\{u_i\}_{i=1}^l$ satisfies:
	\begin{equation}
	\sum_{i=1}^l u_i=0,
	\end{equation}
	or in matrix form where $\{u_i\}_{i=1}^l$ are the columns of $U\in\Real^{3\times l}$:
	\begin{equation}\label{e:U1_0}
	U\mathbf{1}  = 0.
	\end{equation}
	Since the embedding $q$ is generic, 
	\begin{align}
		\dim \aff \set{q_i}_{i=1}^l &= \dim \mathrm{span} \set{u_i}_{i=1}^l = l-1,
	\end{align}
	where $\aff$ denotes the affine-hull. Therefore the column rank of $U$ is $l-1$ and $\dim \ker U =1 $. 
	
	Now, assume a different embedding $\wt{q}$ such that one edge is fixed, that is w.l.o.g.~$\wt{q}_i=q_i$, $i=0,1$ (\ie, $e_{0,1}$ is fixed). In particular $\wt{u}_1=u_1$.
	Since $\wt{q}$ is an embedding, all vectors are by assumption scaled versions, $\wt{u}_i=\alpha_i u_i$, where $\alpha_i\in\Real$, $i\in[l]$. Furthermore, $\alpha=[\alpha_1,\ldots,\alpha_l]^T$ satisfies $U\alpha=0$. Since $\wt{u}_1=u_1$ we know that $\alpha_1=1$ and since we showed above that $\dim \ker U =1$ we get that $\alpha=\mathbf{1}$. That is, $u_i=\wt{u}_i$, $i\in [l]$. Since $\wt{q}_0=q_0$ we consequently get that $$\wt{q}_j=\wt{q}_0+\sum_{i=1}^j \wt{u}_i=q_0+\sum_{i=1}^j u_i = q_j,$$ for all $j\in [l]$.
	We showed there could be only one generic embedding and therefore the lemma is proved. 
	
	
\end{proof}

\begin{Lemma}\label{lemma:2}
	Having a graph $G=(V_G,E_G)$ and its sub-graph $G'=(V_{G'},E_{G'})$. If there exists a simple cycle $C=(V_C,E_C)$  in G containing an edge from $E_{G'}$ and a vertex from $V_{G}\setminus V_{G'}$, then there exists a chordless cycle $\tilde{C}=(V_{\tilde{C}},E_{\tilde{C}})$ containing an edge from $E_{G'}$ and a vertex from $V_{G}\setminus V_{G'}$.
\end{Lemma}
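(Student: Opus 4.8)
The plan is to prove the statement by \emph{induction on the length} $l=|V_C|$ of the given simple cycle, or, equivalently, to take a shortest simple cycle that still enjoys both properties and show that it must be chordless. Fix the distinguished edge $e\in E_{G'}$ and the distinguished vertex $w\in V_G\setminus V_{G'}$ lying on $C$. If $C$ has no chord we are done with $\tilde C=C$ (this includes the base case of a triangle, which is always chordless). Otherwise the goal is to produce a \emph{strictly shorter} simple cycle that again contains an edge of $E_{G'}$ and a vertex of $V_G\setminus V_{G'}$, and then invoke the induction hypothesis on it.

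First I would carry out the \emph{chord-splitting step}. A chord of $C$ is an edge $xy\in E_G$ joining two vertices that are non-consecutive along $C$; it partitions $C$ into two arcs $P,Q$ from $x$ to $y$, yielding two simple cycles $C_P=P+xy$ and $C_Q=Q+xy$, each strictly shorter than $C$. Since $e$ is an edge of $C$ while $xy$ is not, $e$ lies in exactly one arc, say $e\in P$, so $C_P$ already has the first property. If $C_P$ also contains some vertex of $V_G\setminus V_{G'}$ (for instance if $w$, or one of the chord endpoints $x,y$, happens to be external), then $C_P$ has both properties and is shorter, and the induction hypothesis applied to $C_P$ finishes the argument.

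The main obstacle is the remaining case, in which the $E_{G'}$-edge and every external vertex are \emph{separated by the chord}: every vertex of $C_P$ lies in $V_{G'}$ (so in particular $x,y\in V_{G'}$), while $w$, being absent from $V(C_P)$, is an internal vertex of $Q$ and hence of $C_Q$. Here $C_Q$ already has the second property through $w$, so it suffices to exhibit an edge of $C_Q$ in $E_{G'}$. The key observation is that the chord $xy$ has both endpoints in $V_{G'}$; reading $G'$ as the subgraph induced on $V_{G'}$ — which is the situation in our application, where $G'$ is a rigid component — the edge $xy$ then belongs to $E_{G'}$ and lies on $C_Q$, so $C_Q$ inherits the first property as well. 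Thus $C_Q$ is a strictly shorter cycle with both properties and the induction applies.

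I expect this separation case to be the delicate point, since it is exactly where one must use that a chord between two $V_{G'}$-vertices supplies an edge of $E_{G'}$: a naive length-minimization without this observation stalls, and indeed one can build a non-induced $G'$ in which the only cycle carrying both properties has a chord. Granting it, each reduction strictly decreases the cycle length, so the induction bottoms out at a chordless cycle that, by construction, still contains an edge of $E_{G'}$ and a vertex of $V_G\setminus V_{G'}$, which is the desired $\tilde C$.
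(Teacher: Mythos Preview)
Your argument is correct and follows the same chord-splitting descent as the paper: split along a chord, keep the sub-cycle that retains both an $E_{G'}$-edge and an external vertex, and iterate until chordless. Your case analysis is organized slightly differently (you branch on whether $C_P$ contains an external vertex, the paper branches on whether both chord endpoints lie in $V_{G'}$), but the two are equivalent; you are also right to flag that the ``separation'' case needs $G'$ to be induced so that the chord $xy$ itself lies in $E_{G'}$ --- the paper uses this implicitly.
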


\begin{proof}[Proof of lemma \ref{lemma:2}]
	If $C$ is chordless we are done. If not we show that a shorter cycle with the same properties can be found: in this case,  there exists an edge $e_{ij}$ with non-consecutive indices. By adding this edge we split the original cycle into two shorter cycles containing $e_{ij}$. If both endpoints of $e_{ij}$ are from $E_{G'}$, keep the cycle that also contains the vertex from $V_{G}\setminus V_{G'}$. Otherwise, keep the cycle containing the edge from $E_{G'}$. In both cases it is guaranteed that the new chosen cycle is shorter and contains an edge from $E_{G'}$ and a vertex from $V_{G}\setminus V_{G'}$. Repeating this process, in a finite number of steps, a chordless cycle satisfying the conditions will be obtained.
\end{proof}

\begin{proof}[Proof of Theorem \ref{thm:graph_rigidity}]
	Let $G$ denote a 2-connected graph with chordless cycles of length at most 4, and $q\in\Real^{n\times 3}$ a generic embedding.  We will show that $q$ is unique up to global scale and translation.
	
	We define an iterative process that grows an s-t rigid subgraph. 
	
	Let $G'$ be a subgraph defined by a set of vertices $V'_G\subset V_G$. First, set $G'$ according to $V'=\set{v_1,v_2}$, where $v_1,v_2\in V_G$ are two adjacent vertices, \ie, $e_{12}\in E_G$. While there is a chordless cycle that contains a vertex $v\in V_G\setminus V'_G$ and an edge in $G'$ add it to $G'$. 
	
	To finish the proof we need to prove: (i) at every iteration of the algorithm $G'$ is s-t rigid; and (ii) when the the algorithm terminates $V'_G=V_G$. 
	
	We start with (i): First, when $V'_G=\set{v_1,v_2}$, $G'$ is s-t rigid by definition. Now given an s-t rigid $G'$, we need to prove that $G'\cup C$ is s-t rigid, where $C$ is a chordless cycle as described above.  Since all chordless cycles in $G$ are of length $\leq 4$, by Lemma \ref{lemma:1} $C$ is s-t rigid. By assumption $G'$ is s-t rigid, and since $G'$ and $C$ share an edge, their union $G'\cup C$ is s-t rigid. 
	
	
	Next, we prove (ii). Assume towards a contradiction that $V_{G}\setminus V_{G'}\neq \emptyset$. Since $G$ is connected there exists an edge $e_{ij}$ with one endpoint $v_i\in V_{G}\setminus V_{G'}$ and the other $v_j\in V_{G'}$. Furthermore, since $G'$ is connected there exists an edge $e_{jk}\in E_{G'}$ with $v_k\in V_{G'}$ (see the inset (a)).
	
	\begin{figure}[H]
		\includegraphics[width=\columnwidth]{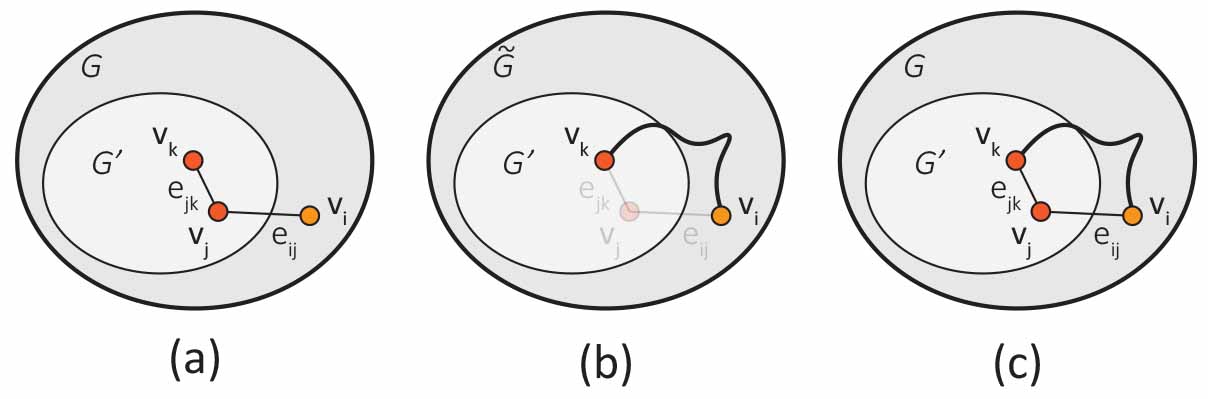}
	\end{figure}

	
	Using the 2-connectedness of $G$, we can exclude $v_j$ to obtain a new connected graph $\tilde{G}$. Since $\tilde{G}$ is connected, there exists a path between $v_i$ and $v_k$ which does not include $v_j$ (inset (b)). Taking this path and completing it with $e_{ij},e_{jk}$ we get a simple cycle containing an edge $e_{jk}\in E_{G'}$ and a vertex from $v_i\in V_{G}\setminus V_{G'}$ (inset (c)). Using Lemma \ref{lemma:2} there exists a chordless cycle $C$ with an edge in $E_{G'}$ and a vertex in $V_G\setminus V'_G$ in contradiction to the fact that the algorithm terminated.

\end{proof}

\begin{proof}[Proof of Proposition \ref{prop:rigidity2}]
	$\impliedby$
	Assume by way of contradiction that there exists two embedding $q,q'\in\Real^{n\times 3}$ that are not related by a global scale and translation, and denote by $r=r(q),r'=r(q')\in \Real^{3\times 3 \times|\mF|}$ the corresponding vertex assignments for all the triangles. WLOG we can assume that $r,r'$ satisfy Equation (\ref{e:P0}) by proper scaling and translating. Furthermore, $r,r'$ satisfy Equation (\ref{e:scale_translation}) as well. This implies that  $r-r'\neq 0$ is in the kernel of the matrix of Equations (\ref{e:scale_translation})-(\ref{e:P0}) which means it is not full rank. 
	
	$\implies$
	Assume by way of contradiction that the linear system  (\ref{e:scale_translation})-(\ref{e:P0}) does not have full column rank. This implies that there exists two different solutions to the system that agree on the first triangle (Equation (\ref{e:P0})). This is a contradiction to the assumption that the triangulation has a unique embedding up to global scale and translation.

\end{proof}

\begin{proof}[Proof of Proposition \ref{thm:st_generic}]

Indeed, let $A\in \Real^{9|\mF|\times (4|\mF|+3|\mV|)}$ be the matrix of the linear system \eqref{e:scale_translation}-\eqref{e:P0}. Since $r_{\scriptscriptstyle{P}}=(r_{\scriptscriptstyle{P},i},r_{\scriptscriptstyle{P},j},r_{\scriptscriptstyle{P},k})$ is a centered-scaled version of $(p_i,p_j,p_k)$ it can be written as $r_P = \alpha_P(p_i,p_j,p_k) + \beta_P$ for some $\alpha_P\in\Real$, $\beta_P\in\Real^3$. Therefore, $\det (A^T A)$ is a polynomial $\pi$ in $\mP\in \Real^{3|\mV|}$ and $\alpha\in\Real^{|\mF|},\beta\in \Real^{3|\mF|}$ and can be written as $\pi(\mP,\alpha,\beta)=\sum_k \tau_k(\mP) \eta_k(\alpha,\beta) $, where $\eta_k(\alpha,\beta)$ are monomials and $\tau_k$ polynomials. If all polynomials $\tau_k$ are the zero polynomials, then $\pi$ is the zero polynomial and $(\mP,T)$ is not s-t rigid for all $\mP$. Otherwise, at-least one $\tau_k$ is not the zero polynomial. Using the fact that a non-zero polynomial is non-zero almost everywhere \cite{caron2005zero} we get that for almost every $\mP$, $\tau_k(\mP)\ne 0$. Fixing such $\mP$ in $\pi$ we have a non-zero polynomial in $\alpha,\beta$ and therefore $\pi(\mP,\alpha,\beta)\ne 0$ for almost all $\alpha,\beta$. 
\end{proof}
 \nocite{heli2018}

\section{Architecture details} \label{appendixB}

{
	\begin{table}[ht]
		\centering 
		\tiny	
		\begin{tabular}{lrlr}
			\toprule
			GENERATOR &       &       &  \\
			\midrule
			&       & input & \multicolumn{1}{l}{output} \\
			&       &       &  \\
			FC    &       & \multicolumn{1}{r}{128} & \multicolumn{1}{l}{4x4x1536} \\
			periodic conv & \multicolumn{1}{l}{3x3} & 4x4x1536 & \multicolumn{1}{l}{4x4x1536} \\
			Relu  &       &       &  \\
			upsample &       & 4x4x1536 & \multicolumn{1}{l}{8x8x1536} \\
			periodic conv & \multicolumn{1}{l}{3x3} & 8x8x1536 & \multicolumn{1}{l}{8x8x768} \\
			Relu  &       &       &  \\
			periodic conv & \multicolumn{1}{l}{3x3} & 8x8x768 & \multicolumn{1}{l}{8x8x768} \\
			Relu  &       &       &  \\
			upsample &       & 8x8x768 & \multicolumn{1}{l}{16x16x768} \\
			periodic conv & \multicolumn{1}{l}{3x3} & 16x16x768 & \multicolumn{1}{l}{16x16x384} \\
			Relu  &       &       &  \\
			periodic conv & \multicolumn{1}{l}{3x3} & 16x16x384 & \multicolumn{1}{l}{16x16x384} \\
			Relu  &       &       &  \\
			upsample &       & 16x16x384 & \multicolumn{1}{l}{32x32x384} \\
			periodic conv & \multicolumn{1}{l}{3x3} & 32x32x384 & \multicolumn{1}{l}{32x32x192} \\
			Relu  &       &       &  \\
			periodic conv & \multicolumn{1}{l}{3x3} & 32x32x192 & \multicolumn{1}{l}{32x32x192} \\
			Relu  &       &       &  \\
			upsample & \multicolumn{1}{l}{3x3} & 32x32x192 & \multicolumn{1}{l}{64x64x192} \\
			periodic conv &       & 64x64x192 & \multicolumn{1}{l}{64x64x96} \\
			Relu  &       &       &  \\
			periodic conv & \multicolumn{1}{l}{3x3} & 64x64x96 & \multicolumn{1}{l}{64x64x96} \\
			Relu  &       &       &  \\
			periodic conv & \multicolumn{1}{l}{1x1} & 64x64x96 & \multicolumn{1}{l}{64x64x48} \\
			symmetry projection layer &       & 64x64x48 & \multicolumn{1}{l}{64x64x48} \\
			landmark consistency &       & 64x64x48 & \multicolumn{1}{l}{64x64x48} \\
			zero mean &       & 64x64x48 & \multicolumn{1}{l}{64x64x48} \\
			\toprule
			DISCRIMINATOR &       &       &  \\
			\midrule
			periodic conv & \multicolumn{1}{l}{1x1} & 64x64x48 & \multicolumn{1}{l}{64x64x96} \\
			LeRelu &       &       &  \\
			periodic conv & \multicolumn{1}{l}{3x3} & 64x64x96 & \multicolumn{1}{l}{64x64x96} \\
			LeRelu &       &       &  \\
			periodic conv & \multicolumn{1}{l}{3x3} & 64x64x96 & \multicolumn{1}{l}{64x64x192} \\
			LeRelu &       &       &  \\
			downsample &       & 64x64x192 & \multicolumn{1}{l}{32x32x192} \\
			periodic conv & \multicolumn{1}{l}{3x3} & 32x32x192 & \multicolumn{1}{l}{32x32x192} \\
			LeRelu &       &       &  \\
			periodic conv & \multicolumn{1}{l}{3x3} & 32x32x192 & \multicolumn{1}{l}{32x32x384} \\
			LeRelu &       &       &  \\
			downsample &       & 32x32x384 & \multicolumn{1}{l}{16x16x384} \\
			periodic conv & \multicolumn{1}{l}{3x3} & 16x16x384 & \multicolumn{1}{l}{16x16x384} \\
			LeRelu &       &       &  \\
			periodic conv & \multicolumn{1}{l}{3x3} & 16x16x384 & \multicolumn{1}{l}{16x16x768} \\
			LeRelu &       &       &  \\
			downsample &       & 16x16x768 & \multicolumn{1}{l}{8x8x768} \\
			periodic conv & \multicolumn{1}{l}{3x3} & 8x8x768 & \multicolumn{1}{l}{8x8x768} \\
			LeRelu &       &       &  \\
			periodic conv & \multicolumn{1}{l}{3x3} & 8x8x768 & \multicolumn{1}{l}{8x8x1536} \\
			LeRelu &       &       &  \\
			downsample &       & 8x8x1536 & \multicolumn{1}{l}{4x4x1536} \\
			periodic conv & \multicolumn{1}{l}{3x3} & 4x4x1536 & \multicolumn{1}{l}{4x4x1536} \\
			LeRelu &       &       &  \\
			periodic conv & \multicolumn{1}{l}{4x4} & 4x4x1536 & \multicolumn{1}{l}{1x1x1536} \\
			LeRelu &       &       &  \\
			FC    &       & 1x1536 & 1 \\
			\bottomrule
		\end{tabular}%
		\caption{Architecture details - humans generating network}
		\label{tab:arch_details_humans}%
	\end{table}%
	
}

{\scriptsize
\begin{table}
	\centering
			\tiny	
	\begin{tabular}{lrlr}
		\toprule
	GENERATOR &       &       &  \\
	\midrule
	&       & input & \multicolumn{1}{l}{output} \\
	&       &       &  \\
	FC    &       & \multicolumn{1}{r}{32} & \multicolumn{1}{l}{4x4x256} \\
	periodic conv & \multicolumn{1}{l}{3x3} & 4x4x256 & \multicolumn{1}{l}{4x4x256} \\
	Relu  &       &       &  \\
	upsample &       & 4x4x256 & \multicolumn{1}{l}{8x8x256} \\
	periodic conv & \multicolumn{1}{l}{3x3} & 8x8x256 & \multicolumn{1}{l}{8x8x128} \\
	Relu  &       &       &  \\
	periodic conv & \multicolumn{1}{l}{3x3} & 8x8x128 & \multicolumn{1}{l}{8x8x128} \\
	Relu  &       &       &  \\
	upsample &       & 8x8x128 & \multicolumn{1}{l}{16x16x128} \\
	periodic conv & \multicolumn{1}{l}{3x3} & 16x16x128 & \multicolumn{1}{l}{16x16x64} \\
	Relu  &       &       &  \\
	periodic conv & \multicolumn{1}{l}{3x3} & 16x16x64 & \multicolumn{1}{l}{16x16x64} \\
	Relu  &       &       &  \\
	upsample &       & 16x16x64 & \multicolumn{1}{l}{32x32x64} \\
	periodic conv & \multicolumn{1}{l}{3x3} & 32x32x64 & \multicolumn{1}{l}{32x32x32} \\
	Relu  &       &       &  \\
	periodic conv & \multicolumn{1}{l}{3x3} & 32x32x32 & \multicolumn{1}{l}{32x32x32} \\
	Relu  &       &       &  \\
	upsample & \multicolumn{1}{l}{3x3} & 32x32x32 & \multicolumn{1}{l}{64x64x32} \\
	periodic conv &       & 64x64x32 & \multicolumn{1}{l}{64x64x16} \\
	Relu  &       &       &  \\
	periodic conv & \multicolumn{1}{l}{3x3} & 64x64x16 & \multicolumn{1}{l}{64x64x16} \\
	Relu  &       &       &  \\
	periodic conv & \multicolumn{1}{l}{1x1} & 64x64x16 & \multicolumn{1}{l}{64x64x12} \\
	symmetry projection layer &       & 64x64x12 & \multicolumn{1}{l}{64x64x12} \\
	landmark consistency &       & 64x64x12 & \multicolumn{1}{l}{64x64x12} \\
	zero mean &       & 64x64x12 & \multicolumn{1}{l}{64x64x12} \\
	\toprule
	DISCRIMINATOR &       &       &  \\
	\midrule
	periodic conv & \multicolumn{1}{l}{1x1} & 64x64x12 & \multicolumn{1}{l}{64x64x16} \\
	LeRelu &       &       &  \\
	periodic conv & \multicolumn{1}{l}{3x3} & 64x64x16 & \multicolumn{1}{l}{64x64x16} \\
	LeRelu &       &       &  \\
	periodic conv & \multicolumn{1}{l}{3x3} & 64x64x16 & \multicolumn{1}{l}{64x64x32} \\
	LeRelu &       &       &  \\
	downsample &       & 64x64x32 & \multicolumn{1}{l}{32x32x32} \\
	periodic conv & \multicolumn{1}{l}{3x3} & 32x32x32 & \multicolumn{1}{l}{32x32x32} \\
	LeRelu &       &       &  \\
	periodic conv & \multicolumn{1}{l}{3x3} & 32x32x32 & \multicolumn{1}{l}{32x32x64} \\
	LeRelu &       &       &  \\
	downsample &       & 32x32x64 & \multicolumn{1}{l}{16x16x64} \\
	periodic conv & \multicolumn{1}{l}{3x3} & 16x16x64 & \multicolumn{1}{l}{16x16x64} \\
	LeRelu &       &       &  \\
	periodic conv & \multicolumn{1}{l}{3x3} & 16x16x64 & \multicolumn{1}{l}{16x16x128} \\
	LeRelu &       &       &  \\
	downsample &       & 16x16x128 & \multicolumn{1}{l}{8x8x128} \\
	periodic conv & \multicolumn{1}{l}{3x3} & 8x8x128 & \multicolumn{1}{l}{8x8x128} \\
	LeRelu &       &       &  \\
	periodic conv & \multicolumn{1}{l}{3x3} & 8x8x128 & \multicolumn{1}{l}{8x8x256} \\
	LeRelu &       &       &  \\
	downsample &       & 8x8x256 & \multicolumn{1}{l}{4x4x256} \\
	periodic conv & \multicolumn{1}{l}{3x3} & 4x4x256 & \multicolumn{1}{l}{4x4x256} \\
	LeRelu &       &       &  \\
	periodic conv & \multicolumn{1}{l}{4x4} & 4x4x256 & \multicolumn{1}{l}{1x1x256} \\
	LeRelu &       &       &  \\
	FC    &       & 1x256 & 1 \\
		\bottomrule
	\end{tabular}%

	\caption{Architecture details - teeth generating network}
	\label{tab:arch_details_teeth}%
		\vspace{300pt}
\end{table}%
}

\end{document}